\documentclass[sn-mathphys,Numbered]{sn-jnl}

\usepackage{graphicx}
\usepackage{booktabs}
\usepackage{array}
\usepackage{makecell}
\usepackage{multirow}
\usepackage{tabularx}
\usepackage{siunitx}
\usepackage{xr-hyper}    %% cross-document refs to supplementary (must precede hyperref)
\usepackage{hyperref}
\usepackage{amsmath,amssymb,amsthm}
\usepackage{mathtools}
\usepackage{bm}
\usepackage{bigfoot}      %% required by sn-jnl.cls AtBeginDocument hook (\SetFootnoteHook + \DeclareNewFootnote)

\newtheorem{lemma}{Lemma}
\newtheorem{proposition}{Proposition}

\theoremstyle{remark}

\usepackage[htt]{hyphenat}


\title[After the Euclidean Highway]{After the Euclidean Highway: Hyperbolic Expert AI as the Next Innovation}

\author*[1]{\fnm{Kwan Soo} \sur{Shin}}\email{sshin@pmminds.ai}

\author[2]{\fnm{In Seok} \sur{Kang}}

\author[2,3]{\fnm{Munho} \sur{Lee}}

\affil*[1]{\orgname{PolymathMinds Lab}, \orgaddress{\city{Asan}, \country{Republic of Korea}}}

\affil[2]{\orgname{aSSIST University}, \orgaddress{\city{Seoul}, \country{Republic of Korea}}}

\affil[3]{\orgname{Samsung Engineering}, \orgaddress{\city{Seoul}, \country{Republic of Korea}}}

\abstract{%
Expert domains are trees; the Euclidean transformer is not, diluting parent--child structure exponentially at depth. The hyperbolic turn left one question unasked: not \emph{how much} of a network to curve, but \emph{where} curvature may touch the gradient. \textbf{Placement is a law, not a knob}: the same geometry on a trainable adapter collapses training (seventeen training collapses, $\sim$220 GPU-hours), yet at the loss layer alone it trains without one---this is \textbf{HySAT (Hyperbolic Structure-Aware Training)}, hyperbolic losses \emph{at the loss layer only}. Across six expert SLMs we constructed and deployed (Llama 3.1 and EXAONE 3.5; four adapter strategies; \textbf{18.0M-sample corpus}; \textbf{zero NaN} over $\sim$317K optimizer steps), a matched four-arm ablation isolates the preserved manifold invariant, and three propositions and a lemma prove why loss-only placement is stable where adapter-on-manifold is not. Four models are operationally deployed (one live, consumer-facing), two open-weight, with per-step traces and a seventeen-incident failure ledger on Zenodo (CC-BY-4.0).%
}

\keywords{Hyperbolic fine-tuning, Domain-expert AI, Small language models, Lorentz manifold, HySAT, LoRA adapters, Cross-disciplinary AI boundaries}

\begin{document}

\maketitle

%% =================================================================
%% MAIN TEXT
%% =================================================================
%%%%%%%%%%%%%%%%%%%%%%%%%%%%%%%%%%%%%%%%%%%%%%%%%%%%%%%%%%%%%%%%%%%%%
%% body.tex — Main text of "After the Euclidean Highway"
%% Main text (NMI body-cap compliant; full razor-cell + Generation V derivation in SI-O)
%% Structure:
%%   1 | Main (Introduction)        §1
%%   2 | Results                    §2.1 -- §2.7
%%   3 | Discussion                 §3
%%   4 | Methods                    §4
%%%%%%%%%%%%%%%%%%%%%%%%%%%%%%%%%%%%%%%%%%%%%%%%%%%%%%%%%%%%%%%%%%%%%

\section{Main}
\label{sec:main}

\subsection{The Euclidean Highway and the Expert Wall}
\label{sec:highway}

In 2017, \citet{Vaswani2017} opened a road that the next decade would travel without exit. BERT, GPT-4, Claude, Gemini, and DeepSeek differ in scale, corpus, and alignment, but not in the geometry of their representations. The substrate is everywhere Euclidean.

The highway has a wall. When the task shifts from general fluency to \emph{domain expertise}, the flat substrate stops working in three independent ways: interpolation-based guarantees almost surely fail outside the training convex hull \citep{Balestriero2021}; log-precision transformers are bounded to constant-depth threshold circuits \citep{MerrillSabharwal2023, MerrillSabharwal2024}; compositional accuracy decays exponentially with depth \citep{Dziri2023}. The wall reappears beyond the mathematics --- in coordinated default--executive--salience network dynamics \citep{Beaty2016, Beaty2018} and in foundation-model governance gaps \citep{Bommasani2021, Jobin2019NMI, Roberts2023NMI}. The wall is structural, not contingent. It forces the question the hyperbolic turn had left unasked: not \emph{how much} of a network to curve, but \emph{where} curvature may touch the gradient without destabilising training at scale.

\subsection{The Razor Cell and the Second Innovation}
\label{sec:razor}

A concurrent cartographic companion manuscript (under review) maps the empty quadrant of post-2022 domain-expert systems combining high novelty with high human usefulness, and names it the \emph{razor cell}; Extended Data Fig~2 visualises the same cell in this paper's operational coordinates (decision consequentiality $\times$ pretraining-data availability; full Q1--Q4 taxonomy in \S\ref{si:O1}). Our diagnosis from four theoretical pillars (\S\ref{sec:four-pillars}) arrives at the same vacancy from independent disciplinary directions.

Our answer is a second innovation \emph{on top of} the Euclidean highway, not its replacement: a geometric correction to the training signal --- hyperbolic losses referencing tree-structured domain ontologies. We name this principle \textbf{HySAT --- Hyperbolic Structure-Aware Training}. On the Lorentz manifold, finite trees embed with arbitrarily low distortion at modest dimension \citep{Sarkar2011, NickelKiela2017}; the flat embedding stretches the same hierarchies onto a sphere where parent-child and sibling relations dilute exponentially with depth. HySAT teaches a Euclidean transformer these hierarchies through a corrective loss alone --- without rebuilding the network. Generation V derivation from Generations I--IV is in \S\ref{si:O2}--\S\ref{si:O3}.

This idea did not arrive from nowhere. For a decade the hyperbolic-representation programme moved steadily \emph{into} the network. \citet{NickelKiela2017} first showed that the negatively-curved Poincar\'e ball embeds hierarchies far more parsimoniously than Euclidean space can; hyperbolic graph and neural networks then carried that curvature inward, layer by layer \citep{Ganea2018HNN, Chami2019HGCN, Shimizu2021HNNpp}; and the move reached the transformer itself, first as an adapter on the manifold \citep{Yang2024HypLoRA} and most recently as a fully-hyperbolic decoder and a multi-modal pretraining objective \citep{He2025HELM, HyperET2025}. Each step advanced the field, and each answered the same implicit question --- \emph{how much} of the network should be hyperbolic --- by pushing the geometry deeper. We build directly on this trajectory (full five-generation lineage in \S\ref{si:O3}).

This trajectory runs parallel to the parameter-efficient fine-tuning programme, which established that a large pretrained model adapts through a small changed subset of parameters while the rest stays fixed \citep{Ding2023NMI, Hu2022LoRA}; that literature asked \emph{how few} parameters must change. Our question is orthogonal and is the one the hyperbolic turn makes unavoidable: \emph{where} should non-Euclidean structure enter when the changed parameters themselves remain Euclidean? We learned this question by failing on it. Carrying the manifold into the trainable adapter --- the natural next step on the same road --- produced 17 crash events across three project families at industrial scale ($\sim$220 hours of B200 SXM compute), while the same placement on the original 10{,}000-sample benchmark stays comfortably stable: the failure is not the manifold but the scale-dependent coupling between curvature and the adapter's gradient. HySAT answers the placement question with the most conservative choice the lineage had not yet isolated --- the curvature touches \emph{only the loss layer}, never a trainable weight (Figure~\ref{fig:placement}; full three-way comparison in Extended Data Table~1). The substrate the field built remains intact; we change only where the geometry enters.

\begin{figure}[htbp]
\centering
\caption{Where do you place the curvature?}\label{fig:placement}
\includegraphics[width=\linewidth]{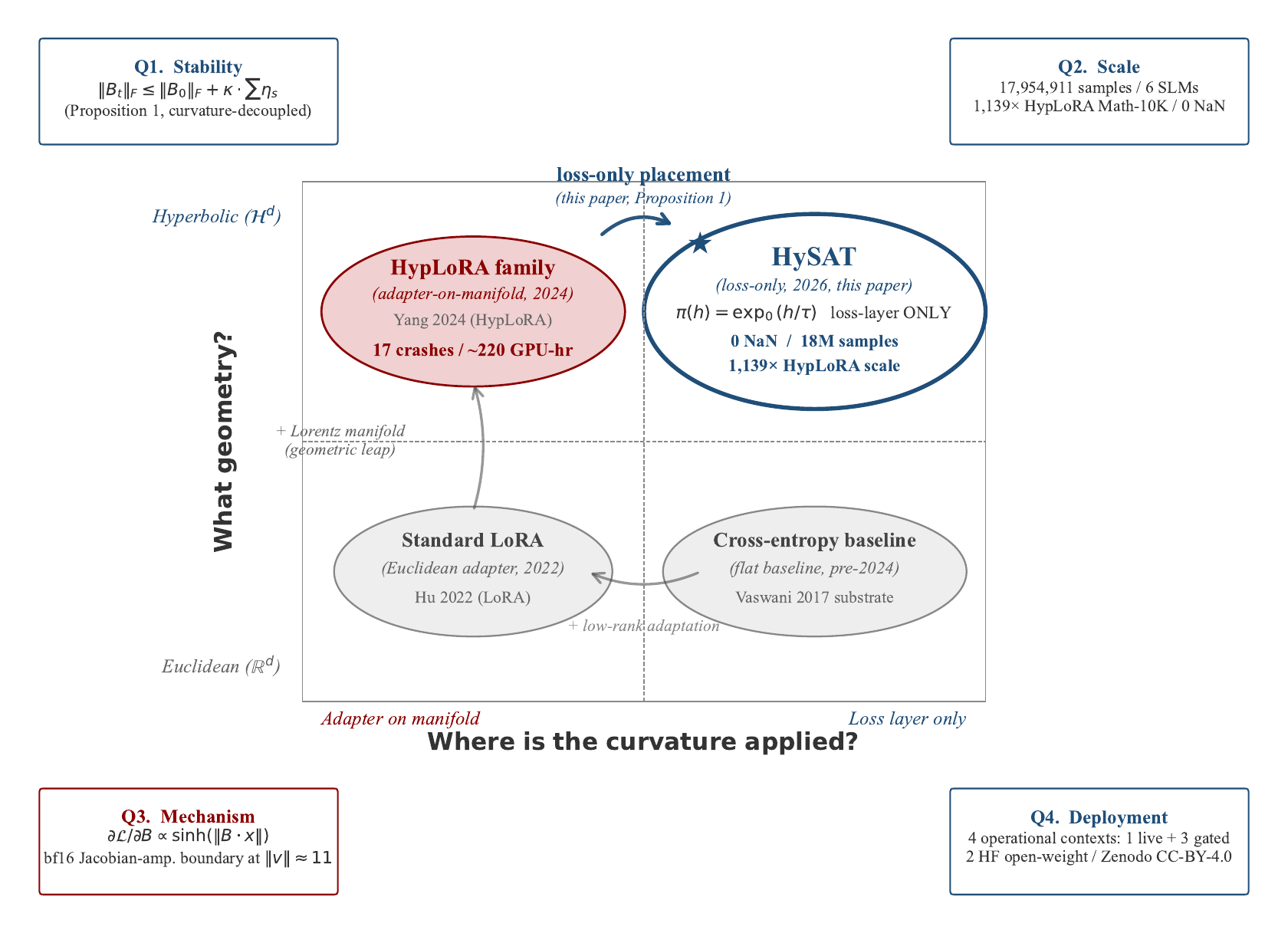}
\begin{flushleft}\small In Figure~\ref{fig:placement}, the architectural placement axis of the hyperbolic-AI substrate is mapped. $\bigstar$ HySAT sits at the only quadrant combining hyperbolic geometry with loss-layer-only placement, decoupling curvature from gradient flow (Proposition~\ref{prop:hysat-stability}). Quadrants: cross-entropy baseline (\citealp{Vaswani2017}); standard LoRA (\citealp{Hu2022LoRA}); adapter-on-manifold (\citealp{Yang2024HypLoRA}); loss-only (HySAT). The five-generation hyperbolic-deep-learning lineage is mapped separately in Extended Data Fig~1 and \S\ref{si:O3}. Q1--Q4 corners: stability, scale, mechanism, deployment. Empirical realisation in Figure~\ref{fig:matrix}; scale comparison in Figure~\ref{fig:scale}.\end{flushleft}
\end{figure}

\subsection{Contributions and Roadmap}
\label{sec:roadmap}

The contribution is a \emph{placement law}: where foundation-model work shows what large domain models can do, we show how expert small language models can be constructed when the domain knowledge is hierarchical, structurally sparse, and not recoverable by web-scale pretraining --- keep trainable adaptation Euclidean for pretrained-base fine-tuning (the from-scratch BS Sovereign anchor is fully hyperbolic, stable under curvature-aware Riemannian optimisation), and inject hyperbolic structure only through the loss. The paper has one principle and several consequences. The principle is HySAT: place the hyperbolic geometry only at the loss layer (\S\ref{sec:hysat}; Propositions~\ref{prop:hysat-stability}--\ref{thm:hysat-convergence} and Lemma~\ref{lem:placement}). The consequences: (i) scale --- six SLMs totalling 17{,}954{,}911 training samples (ORAA's single-project corpus alone is three orders of magnitude --- $1{,}139\times$ --- beyond HypLoRA's Math-10K; cumulative $1{,}795\times$), four operationally deployed, two open-weight at publication; (ii) breadth --- a four-strategy cross-validation spanning two base families; (iii) mechanism --- a batch-size natural experiment formalised as Proposition~\ref{prop:pair-formation}; (iv) framing --- four theoretical pillars extending Stein's 1953 tripartite definition; (v) transparency --- Zenodo CC-BY-4.0 release of the per-step traces, the seventeen-incident failure ledger (one preserved raw trajectory and a crash-reproduction script), and the verified RunPod compute aggregate ($\$$4{,}998 / $\sim$1{,}111 GPU-hours).

%% =================================================================
%% §2 RESULTS
%% =================================================================

\section{Results}
\label{sec:results}

%% §2.1 HySAT Principle — Hyperbolic Geometry at the Loss Layer
\subsection{HySAT Principle --- Hyperbolic Geometry at the Loss Layer}
\label{sec:hysat}

To see why placement is the decisive variable, it helps to read the recent hyperbolic-LLM literature not as a list of competing methods but as a single question asked at three depths. \citet{Yang2024HypLoRA} asked it at the adapter --- can a low-rank update live \emph{on} the manifold? --- and showed it can, at benchmark scale. \citet{He2025HELM} asked it of the whole decoder --- can a transformer be hyperbolic \emph{throughout}? --- and showed that too, in the pretraining register. Both were natural and productive moves: if curvature helps, put more of the network in it. The question they did not need to answer, because their scale did not force it, is the one expert-domain fine-tuning forces immediately --- once the manifold sits inside a \emph{trainable} parameter, the curvature rides the gradient, and at eighteen-million-sample scale that coupling is what breaks. HySAT therefore makes the opposite move from the same lineage: it keeps every trainable parameter Euclidean and lets the manifold enter \textbf{only at the loss layer}. The transformer base and the rank-$r$ LoRA adapter remain in $\mathbb{R}^d$; the Lorentz manifold is referenced only when three structural losses --- Pairwise Tree Lorentz Embedding (PTLE), Hierarchical Label Sibling Distance (HLSD), and Hierarchy-Weighted Contrastive (HWC) --- are computed from hidden states (full definitions in Methods \S\ref{sec:method-formulation}; HLSD activates only when a micro-batch contains a sibling pair, \S\ref{sec:batch-size}). In the hyperbolic-LLM fine-tuning register, \emph{where} the curvature is allowed to touch the gradient, not how much of the network carries it, is the load-bearing design choice.

\begin{figure}[htbp]
\centering
\caption{HySAT vs HyperLoRA placement --- the curvature-coupling failure mode}\label{fig:stability}
\includegraphics[width=0.95\linewidth]{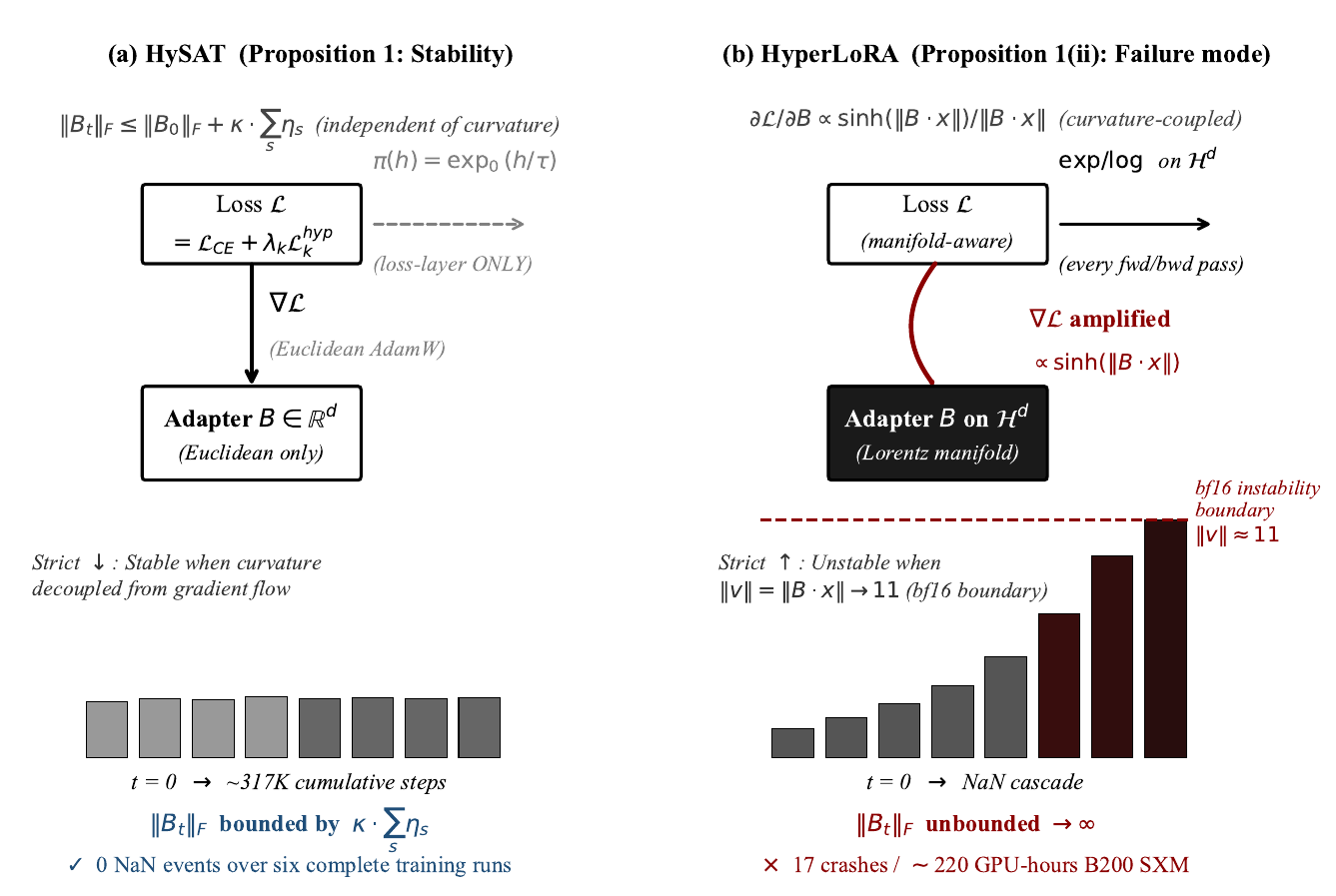}
\begin{flushleft}\small In Figure~\ref{fig:stability}, panel \textbf{(a) HySAT (stable)}: the Lorentz projection $\pi(h) = \exp_0(h/\tau)$ is applied at loss evaluation only; the gradient flows back through $h$ within $\mathbb{R}^d$ under standard Euclidean AdamW. $\lVert B_t \rVert_F$ is bounded by $\kappa \cdot \sum \eta_s$, independent of Lorentz curvature. Bottom bars: 0 NaN events over $\sim$317K cumulative optimizer steps across six complete training runs. Panel \textbf{(b) HyperLoRA (failure mode)}: when the hyperbolic operation lies inside the adapter, the Lorentz exponential map's Jacobian factor $\sinh(\lVert B \cdot x \rVert) / \lVert B \cdot x \rVert$ amplifies updates exponentially with hidden-state norm. Bottom bars: $\lVert B_t \rVert_F$ drives across the practical bf16 instability boundary $\lVert v \rVert \approx 11$ (Jacobian amplification; absolute $\cosh$ overflow near $89$), producing NaN cascades (17 documented crashes, $\sim$220 hours of B200 SXM compute, incident ledger in \S\ref{si:E}; bounded-Jacobian property $M_\tau \leq 5.23/\tau$ in \S\ref{si:A7}). The architectural difference is what curvature lives where, not how much.\end{flushleft}
\end{figure}

\textbf{Why adapter-on-manifold placement fails at scale.} We learned the principle by paying for it: initial HypLoRA-style runs collapsed in seventeen independent failures across three project families, each with monotonic adapter-$B$-norm growth terminating in NaN ($\sim$220 B200 SXM GPU-hours). The failure mode (Proposition~\ref{prop:hysat-stability}(ii)) is not HypLoRA-specific --- \citet{Mishne2023NumStab} and \citet{Klein2025HypRL} document analogous float-precision saturation in hyperbolic training, and the underlying Jacobi-field divergence on negatively-curved manifolds is a classical geometric phenomenon (\citealp{Cafaro2007Jacobi}) --- and under unconstrained AdamW the amplification compounds until $\lVert B_t \rVert_F$ drives $\lVert v \rVert$ across the practical bf16 instability boundary $\approx 11$ (Jacobian amplification). The incident ledger and the stabilisation-condition mapping are in \S\ref{si:E}.

\textbf{Why loss-only placement stabilises.} The adapter lives in $\mathbb{R}^d$, so curvature does not live in the trainable parameters; it enters the gradient only through a bounded, clamped projection Jacobian evaluated at the loss layer. The projection $\pi(h) = \exp_0(h/\tau)$ is differentiable --- its gradient reaches the adapter, while the trainable adapter weights themselves never leave $\mathbb{R}^d$. Across the six complete runs ($\sim$317K optimizer steps; \S\ref{sec:six-slms}) --- three loss-only HySAT systems, the from-scratch fully-hyperbolic BS Sovereign anchor under curvature-aware Riemannian optimisation, and two HyperLoRA systems stabilised per Proposition~\ref{prop:hysat-stability}(ii) --- zero NaN events occur (Figure~
\ref{fig:stability}). The bounded-Jacobian derivation, autograd contract, and $\tau$-clamp calibration are in Methods \S\ref{sec:method-formulation}. A matched four-arm ablation (\S\ref{si:E4}) isolates the property placement controls: holding the base, data, and the same loss-layer tree regulariser fixed, only the Lorentz projection preserves the manifold invariant ($\langle x, x \rangle_L = -1$; drift $\sim$$10^{-6}$ versus $\sim$$10^{4}$ for the flat control, ten orders of magnitude with no seed overlap). Held-out tree-geometry does not separate the arms at this controlled toy scale, so the ablation isolates the mechanism --- invariant preservation --- rather than a downstream gain; the task-level advantage of hyperbolic placement is carried by the six deployed expert models (\S\ref{sec:six-slms}).

\begin{proposition}[HySAT Stability --- Curvature-Decoupled Gradient Flow]
\label{prop:hysat-stability}
Let $B_t \in \mathbb{R}^{d \times r}$ denote the output-matrix of a rank-$r$ LoRA adapter at training step $t$ under Euclidean AdamW with learning rate $\eta_t$, gradient clipping at norm $\kappa$, and loss $L_{\text{total}} = L_{\text{CE}} + \sum_k \lambda_k L_k^{\text{hyp}}$ where $L_k^{\text{hyp}} \in \{L_{\text{PTLE}}, L_{\text{HLSD}}, L_{\text{HWC}}\}$ involves the Lorentz projection $\pi(h) = \exp_0(h/\tau)$ applied only at loss evaluation. Then:

\textbf{(i)} $\mathbb{E}\bigl[\lVert B_t \rVert_F\bigr] \leq \lVert B_0 \rVert_F + \kappa \cdot \sum_{s=0}^{t-1} \eta_s$, depending on neither the Lorentz curvature $c$ nor the manifold geometry: the bound is identical to the Euclidean-only LoRA case \citep{Hu2022LoRA}.

\textbf{(ii)} Under HyperLoRA-style placement (adapter weights on $\mathcal{H}^d$) without curvature-aware optimisation or $B$-norm bounding, the gradient inherits the exp map's differential, with Jacobian norm $\propto \sinh(\lVert B \cdot x \rVert) / \lVert B \cdot x \rVert$, growing super-polynomially with $\lVert B_t^{\text{HyL}} \rVert_F$; in all seventeen recorded incidents the run terminated in NaN or unrecoverable divergence at the practical boundary $\lVert v \rVert \approx 11$ (family-level ledger records; the one preserved raw trajectory shows the divergence-plateau signature, \S\ref{si:E}.2). The failure mode is absent when curvature-aware optimisation \citep{Bonnabel2013, Absil2008}, explicit $B$-norm constraint, or retraction with full-sequence loss and effective batch $\geq 32$ holds (full conditions ii.a--ii.c in Methods \S\ref{sec:method-formulation}).

Full statement, proof sketch, autograd contract, and the seventeen-crash incident ledger in Methods \S\ref{sec:method-formulation} and \S\ref{si:E}.
\end{proposition}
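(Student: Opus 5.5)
The plan for part (i) is a telescoping triangle-inequality argument that never opens the loss. I would write the optimizer step as $B_{t+1} = (1-\eta_t\lambda_{\mathrm{wd}})B_t - \eta_t U_t$, where $U_t$ is the AdamW direction built from the \emph{clipped} gradient $\tilde g_t$ with $\lVert \tilde g_t\rVert_F \le \kappa$. Since $0 \le 1-\eta_t\lambda_{\mathrm{wd}} \le 1$, we get $\lVert B_{t+1}\rVert_F \le \lVert B_t\rVert_F + \eta_t \lVert U_t\rVert_F$. Summing from $0$ to $t-1$ and taking expectations over the mini-batch sampling gives the claimed form, provided we establish $\lVert U_t\rVert_F \le \kappa$.

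The key structural observation is that the loss enters this recursion only through $\tilde g_t$. The chain rule through $\pi(h)=\exp_0(h/\tau)$ affects the raw gradient, but clipping truncates it to norm $\kappa$ before it reaches the update. Hence neither $c$ nor the Lorentz metric can appear in the bound. The same computation with $L_{\mathrm{CE}}$ alone yields the identical bound, which is the comparison with \citep{Hu2022LoRA}.

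I expect the main obstacle to be the step $\lVert U_t\rVert_F\le\kappa$. For plain clipped SGD it is immediate. For Adam, the normalized step $\hat m_t/(\sqrt{\hat v_t}+\epsilon)$ is bounded coordinatewise by a constant depending on $(\beta_1,\beta_2)$, not by $\kappa$. I would therefore first try to prove (i) for the clipped-SGD surrogate. I would then state the Adam version with $\kappa$ replaced by an effective constant $\kappa' = C(\beta_1,\beta_2,\epsilon)\cdot\min\{\kappa/\epsilon,\sqrt{dr}\}$, which is still curvature-free. Curvature-independence, not the exact constant, is the substantive content of (i). If one insists on the literal $\kappa$, the cleanest route is to read the clipping as applied to the final update $U_t$, which I would make explicit as an assumption. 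A secondary point is that finiteness of $\tilde g_t$ requires the $\tau$-clamp, so that $\exp_0$ is evaluated on a bounded argument and no overflow occurs before clipping. Here I would invoke the bounded-Jacobian estimate $M_\tau \le 5.23/\tau$ quoted from \S\ref{si:A7}.

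For part (ii), I would compute the differential of $\exp_0$ on the Lorentz model at $v = Bx$. Split the tangent direction into its radial and orthogonal parts. The radial part is scaled by $\cosh\lVert v\rVert$-type factors, and the orthogonal part by exactly $\sinh\lVert v\rVert/\lVert v\rVert$. By the chain rule, $\nabla_B L = (d\exp_0)_v^{\top}\nabla L \, x^{\top}$, so the gradient norm carries this factor, which grows like $e^{\lVert Bx\rVert}/\lVert Bx\rVert$. Since $\lVert Bx\rVert \le \lVert B\rVert_F\lVert x\rVert$ and is typically comparable to it, the growth is super-polynomial in $\lVert B_t\rVert_F$. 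Under unconstrained AdamW the step is not shrunk in proportion, so there is no restoring mechanism. The bf16 statements ($\lVert v\rVert\approx 11$ for Jacobian saturation, $\cosh$ overflow near $89$) follow from comparing $\sinh$ against the bf16 dynamic range and mantissa. The seventeen-incident claim is empirical; I would cite the ledger in \S\ref{si:E} rather than prove it.

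Finally, each escape condition removes the amplification. Under (ii.b), a $B$-norm constraint $\lVert B\rVert_F\le R$ caps the Jacobian at $\sinh(R\lVert x\rVert)/(R\lVert x\rVert)$. Under (ii.a), Riemannian optimisation with retraction keeps iterates on the manifold, so the step is measured intrinsically rather than through the ambient exp-map blow-up \citep{Bonnabel2013, Absil2008}. Under (ii.c), a full-sequence loss with effective batch $\ge 32$ reduces gradient variance. This third condition I would argue only heuristically, by bounding the probability that a single noisy step pushes $\lVert Bx\rVert$ past the saturation radius.
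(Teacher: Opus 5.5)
Your proposal is correct and follows essentially the paper's route. For (i), a per-step clipped-update bound is telescoped into a curvature-free estimate; the update rule is Euclidean, so $c$ never enters. For (ii), the differential of $\exp_0$ at $v = Bx$, with its $\sinh\lVert v\rVert/\lVert v\rVert$ factor, is paired with the incident ledger, which is cited as empirical evidence rather than proved.

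Your point that genuine AdamW normalisation does not give $\lVert U_t\rVert_F \le \kappa$ from gradient clipping alone is correct. Your fix, reading the clipping as a bound on the final update, is exactly the hypothesis the paper adopts in its Methods restatement (clipped updates ``whose post-clipping update norm is bounded by $\eta_t\kappa$''), and it is what makes the literal-$\kappa$ bound go through.
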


\begin{lemma}[Stability-Dominant Placement of Hyperbolic Operations under Bounded Loss-Layer Projection]
\label{lem:placement}
Consider a neural architecture of $L$ layers with subset $H \subseteq \{1, \dots, L+1\}$ replaced by hyperbolic operations ($L+1$ = loss layer). Let $\sigma(H)$ denote the compositional Lipschitz constant under bf16. Then: \textbf{(i)} among $H \in \{\{\text{loss only}\}, \{\text{adapter}+\text{loss}\}, \{\text{base}+\text{adapter}+\text{loss}\}\}$, the loss-only placement $H = \{L+1\}$ minimises $\sigma(H)$ and preserves the base model's Euclidean trainable-parameter geometry, adding only a bounded objective-Jacobian factor $M_\tau$, under finite-precision arithmetic; \textbf{(ii)} for placements containing base or adapter layers, $\sigma(H)$ is bounded below by the curvature-coupled amplification factor $\sinh(\lVert B \cdot x \rVert) / \lVert B \cdot x \rVert$ of Proposition~\ref{prop:hysat-stability}(ii), growing super-polynomially as hidden-state norms approach the bf16 cosh-overflow boundary.
\end{lemma}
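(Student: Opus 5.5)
The plan is to make $\sigma(H)$ concrete as the product of per-layer Lipschitz constants along the backward path from the loss to the trainable parameters, and then compare the three admissible placements factor by factor. For each layer $\ell$ I write $\mathrm{Lip}_\ell$ for the operator norm of its Jacobian with respect to its input (and, for trainable layers, with respect to its parameters), evaluated under bf16 rounding. Rounding is modelled as a $(1+\epsilon_{\mathrm{bf16}})$ multiplicative perturbation per operation. I then set $\sigma(H)=\prod_{\ell} \mathrm{Lip}_\ell^{(H)}$. The Euclidean factors $\mathrm{Lip}_\ell^{\mathrm{E}}$ for the base and LoRA layers are common to every placement. The comparison therefore reduces to the extra multiplicative factor that each hyperbolic insertion contributes. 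That factor is always $\geq 1$, because replacing a Euclidean map by its composition with $\exp_0$ cannot shrink the Jacobian norm near the origin.

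For part (i), the only hyperbolic operation in $H=\{L+1\}$ is $\pi(h)=\exp_0(h/\tau)$ with the $\tau$-clamp, followed by the losses $L_{\mathrm{PTLE}}, L_{\mathrm{HLSD}}, L_{\mathrm{HWC}}$. I would invoke the bounded-Jacobian property $M_\tau\le 5.23/\tau$ stated with Figure~\ref{fig:stability} (derived in Methods). Because the clamp restricts $\lVert h/\tau\rVert$ to a compact ball, $\sinh(r)/r$ and $\cosh r$ stay bounded there, so the loss-layer factor is the constant $M_\tau$. This gives $\sigma(\{L+1\}) = M_\tau\prod_\ell \mathrm{Lip}_\ell^{\mathrm{E}}$. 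The trainable weights never leave $\mathbb{R}^d$, so Proposition~\ref{prop:hysat-stability}(i) applies unchanged. The Euclidean parameter geometry is therefore preserved, with $\lVert B_t\rVert_F$ controlled by $\kappa\sum\eta_s$.

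For part (ii) and minimality, any placement containing the adapter also contains the loss-layer factor $M_\tau$, multiplied by the adapter factor. The adapter factor is the differential of $\exp_0$ at $v=Bx$, whose norm is $\sinh(\lVert Bx\rVert)/\lVert Bx\rVert$ in the tangential directions, exactly as in Proposition~\ref{prop:hysat-stability}(ii). Adding the base layers only multiplies in more such factors, each at least $1$. This gives the chain
\[
\sigma(\{L+1\})\le\sigma(\{\text{adapter}+L+1\})\le\sigma(\{\text{base}+\text{adapter}+L+1\}),
\]
together with the lower bound $\sigma(H)\ge \frac{\sinh\lVert Bx\rVert}{\lVert Bx\rVert}\,M_\tau\prod_\ell\mathrm{Lip}^{\mathrm{E}}_\ell$. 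Super-polynomial growth follows from $\sinh(r)/r\ge e^{r}/(4r)$ for $r\ge1$. Under bf16, this factor diverges from the representable range as $r$ approaches the cosh-overflow point near $89$; the practical boundary $\approx 11$ is where the amplified update exceeds mantissa resolution.

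The main obstacle is justifying that Lipschitz constants compose as products with the hyperbolic factors appearing as lower bounds, rather than only upper bounds. Submultiplicativity gives upper bounds for free, but part (ii) needs a genuine lower bound. I would handle this by exhibiting a specific input direction: take $x$ aligned with the top singular vector of $B$ and the perturbation tangential to $Bx$. Along that direction the $\exp_0$ differential attains $\sinh r/r$ exactly, and I would argue that the downstream Euclidean layers do not annihilate it, assuming they are generic and full rank on that direction. The bf16 rounding model also needs care, since rounding makes the maps non-smooth. I would treat $\sigma$ as an effective constant on the representable range and state the result in that sense.
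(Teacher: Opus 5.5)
Your proposal is correct and follows essentially the paper's route: read $\sigma(H)$ as the product of layer-wise Lipschitz factors. The clamp then makes the loss-only placement's single hyperbolic factor the constant $M_\tau$, with the trainable weights staying in $\mathbb{R}^d$ (Proposition~\ref{prop:hysat-stability}(i)), while every adapter- or base-containing placement additionally carries the $\exp_0$-differential factor $\sinh(\lVert Bx\rVert)/\lVert Bx\rVert \ge 1$ of Proposition~\ref{prop:hysat-stability}(ii), giving both the ordering in (i) and the super-polynomial lower bound in (ii).

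Under that product reading the lower bound is immediate, so drop your final paragraph rather than lean on it. For the true Lipschitz constant of the composite, ``downstream layers do not annihilate the direction'' is not enough: the loss-layer clamp and RMSNorm rescale tangential perturbations by a factor of order $1/\lVert h\rVert$, which can cancel the $\sinh$ growth.
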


Full proof and cross-architecture lineage corroboration (Poincar\'e embeddings, HGCN, HNN++, MERU, HyperCLIC) in \S\ref{si:A8}. HySAT is the LLM instance with the most conservative $H = \{L+1\}$; HELM \citep{He2025HELM} operates in a different register (general-language pretraining).

%% §2.2 Four-Pillar Theoretical Foundation
\subsection{Four-Pillar Theoretical Foundation}
\label{sec:four-pillars}

A placement principle stands or falls on the disciplines it answers to. HySAT rests on four pillars converging on the same diagnosis from independent starting points. \textbf{Pillar 1 (mathematical)} carries the load-bearing formal justification: Euclidean attention is extrapolative \citep{Balestriero2021}, TC$^0$-bounded \citep{MerrillSabharwal2023}, and compositionally decaying \citep{Dziri2023}, whereas hyperbolic geometry offers an $O(\log N)$ tree-distortion escape \citep{Sarkar2011} the flat substrate cannot match. \textbf{Pillars 2--4} converge on the same diagnosis from three independent directions --- expert cognition and Koestler bisociation \citep{Beaty2016, Koestler1964}, the Bloom--Shulman human-reserved Understand--Evaluate--Create triad \citep{Bloom1956, Shulman1987}, and foundation-model governance \citep{Bommasani2021, Jobin2019NMI} --- and together populate \citet{Stein1953}'s tripartite definition of creativity (novel $\times$ tenable-useful-satisfying $\times$ accepted-by-group-in-time). Full pillar development, intellectual lineages, the five-generation landscape (Extended Data Fig~1), and the eight-adjacent-construct positioning are in \S\ref{si:J}. HySAT's scope is precise: a placement law for tree-structured expert domains, layered \emph{on top of} the transformer highway, with the innovation located exclusively in where the hyperbolic loss signal enters (full boundary list in \S\ref{si:J}.7).

%% §2.3 Six Expert Small Language Models — The Empirical Core
\subsection{Six Expert Small Language Models --- The Empirical Core}
\label{sec:six-slms}

A placement principle is only as good as the systems it survives in production. Where the hyperbolic-representation lineage has so far \emph{analysed} existing models or \emph{adapted} a single pretrained backbone, we \textbf{construct six domain-expert small language models across four adapter strategies and carry four of them into live operational service}, a construction-and-deployment breadth without precedent in the hyperbolic-LLM literature surveyed through 2025\,Q4 \citep{Patil2025HypSurvey}. The six span six independent professional registers: behavioural audit (BS Sovereign 146M), creative ideation (S3 Creative T5), scholarly research (ORAA, Llama 3.1 8B), pedagogy (MetaTeach, EXAONE 7.8B), corporate finance (CEO Finance v2, EXAONE), and admissions counselling (AdmitBrain v6, Llama 3.1 8B). The six span \textbf{four adapter strategies} --- from-scratch hyperbolic training, full fine-tuning with HySAT loss, standard-LoRA with HySAT loss, and HyperLoRA with stabilisation --- across two base-model families (Llama 3.1 8B \citep{Grattafiori2024Llama3} and EXAONE 3.5 7.8B \citep{LGAI2024EXAONE}) plus two base-independent anchors. Each cell exposes a different facet of the cross-project regularity. Matrix in Figure~\ref{fig:matrix}; project summary in Extended Data Table~2.

\begin{figure}[htbp]
\centering
\caption{Six expert SLMs across four adapter strategies}\label{fig:matrix}
\includegraphics[width=\linewidth]{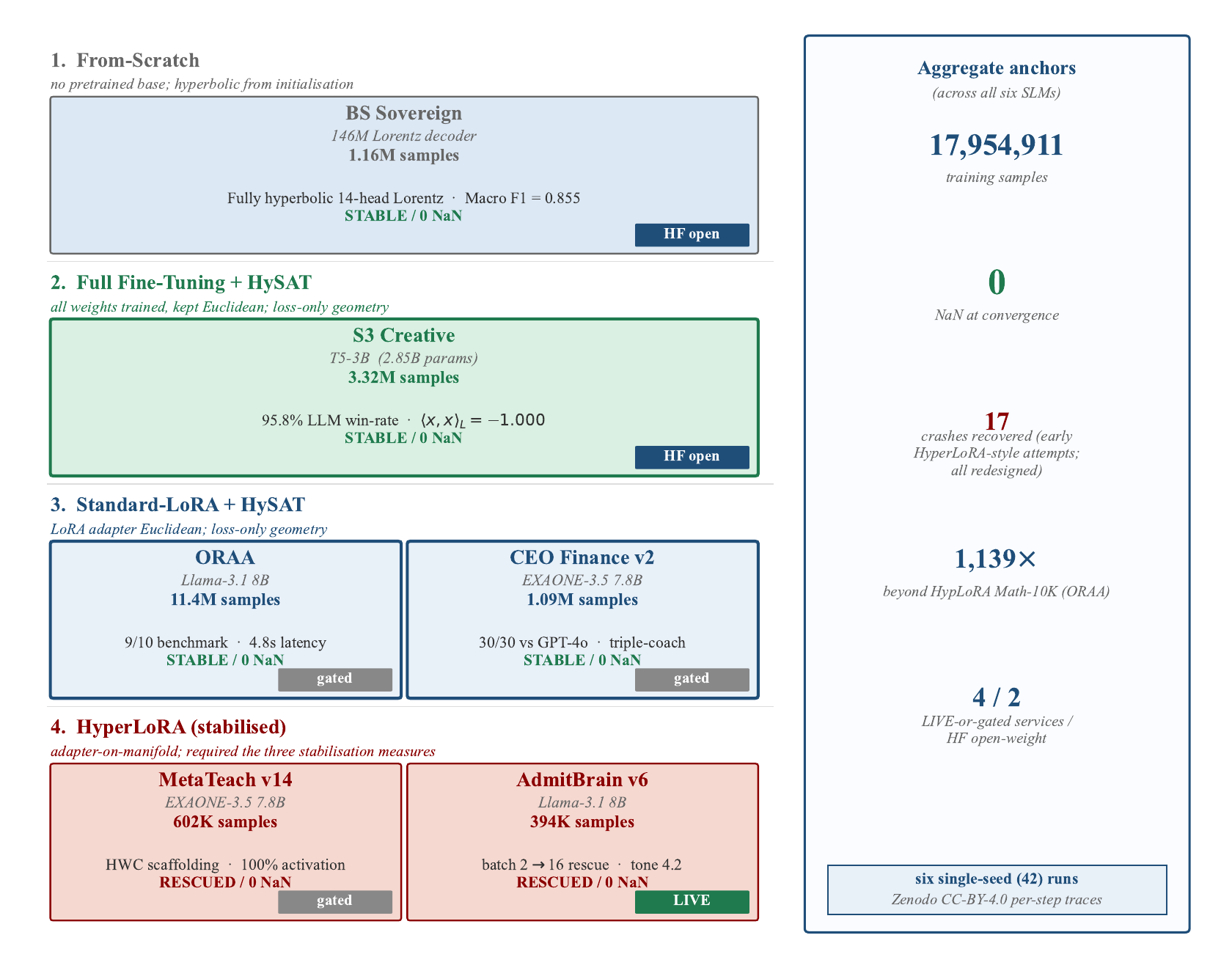}
\begin{flushleft}\small In Figure~\ref{fig:matrix}, four adapter strategies span two base-model families. Each cell reports project, base, sample count, key result, deployment status (LIVE / HF open / gated), and stability outcome. All six current configurations achieve $0$ NaN at convergence; the two HyperLoRA-column cells reached stability through stabilisation measures (AdmitBrain v6: batch $2 \to 16$; MetaTeach v14: HWC scaffolding from start). The historical $17$ crashes (incident ledger in Fig~S1, \S\ref{si:E}) occurred in earlier project versions: $7$ in early ORAA rank-16 HyperLoRA, $6$ in early BS Sovereign full-hyperbolic decoder, $4$ in early AdmitBrain single-adapter HyperLoRA. Aggregate: $17{,}954{,}911$-sample training corpus (cumulative $1{,}795\times$ beyond HypLoRA Math-10K and $105\times$ beyond its largest reported set Commonsense170K; ORAA single-project $1{,}139\times$), $0$ NaN at convergence, six complete training runs. Per-cell narrative profiles in \S\ref{si:L1}--\S\ref{si:L6}; crash incident ledger in SI-E.\end{flushleft}
\end{figure}

\textbf{Empirical aggregate.} Single-project maximum corpus is ORAA at 11{,}387{,}744 samples --- three orders of magnitude beyond HypLoRA's Math-10K benchmark and $67\times$ its largest reported fine-tuning set (Commonsense170K). Six-project cumulative corpus is 17{,}954{,}911 samples across $\sim$317K cumulative optimizer steps in six complete runs, with zero NaN. The Lorentz constraint $\langle x, x \rangle_L = -1$ is preserved within bf16 precision wherever evaluated. A four-arm matched ablation (Qwen 2.5 7B-Instruct, six seeds; \textbf{Extended Data Table~\ref{tab:ed_matched_ablation}}, full per-seed protocol in \S\ref{si:E4}) isolates the one property loss-layer placement controls: with an identical loss-layer tree regulariser on identical hidden states, only the Lorentz-projected arm preserves the manifold invariant (drift $\sim$$10^{-6}$ versus $1.9$--$3.3\times10^{4}$ for the flat Euclidean control --- ten orders of magnitude, no seed overlap). Held-out tree-distance correlation, by contrast, does not separate the arms at this controlled toy scale (HySAT $\rho_L\,0.228{\pm}0.048$ versus $0.224{\pm}0.041$ flat and $0.253{\pm}0.033$ for the supervision-free baseline); the probe isolates invariant preservation --- the mechanism --- while the downstream advantage of hyperbolic placement is carried by the six deployed expert models, not by this toy probe. Four of six occupy operational service contexts: \textbf{AdmitBrain v6 is integrated into a controlled admissions-counselling deployment workflow}; ORAA Dual-SLM, CEO Finance v2, and MetaTeach anchor research-workflow, CFO-advisory, and Socratic-tutoring platforms under gated commercial access; BS Sovereign 146M and S3 Creative T5 are released open-weight on HuggingFace at publication. Per-project profiles are in \S\ref{si:L1}--\S\ref{si:L6}.

\textbf{Notable cross-project findings.} Two batch-size natural experiments (ORAA $b{=}2\to8\to16$; AdmitBrain $b{=}2\to16$) constitute the Proposition~\ref{prop:pair-formation} regime separation (\S\ref{sec:batch-size}); the cross-project activation matrix shows a silent-failure dichotomy below the pair-formation threshold (activation impossible) versus activation above it ($b\geq8$), the realised rate depending on in-batch sibling density (\S\ref{sec:cross-project-matrix}); and curator-orchestrated deployments (ORAA dual-SLM $9/10$ at $4.8$\,s latency; CEO Finance triple-coach $30/30$ vs GPT-4o) are discussed in \S\ref{sec:discussion}.

%% §2.4 Cross-Project Hyperbolic Activation
\subsection{Cross-Project Hyperbolic Activation}
\label{sec:activation}

\begin{figure}[htbp]
\centering
\caption{Cross-project hyperbolic loss activation}\label{fig:activation}
\includegraphics[width=0.95\linewidth]{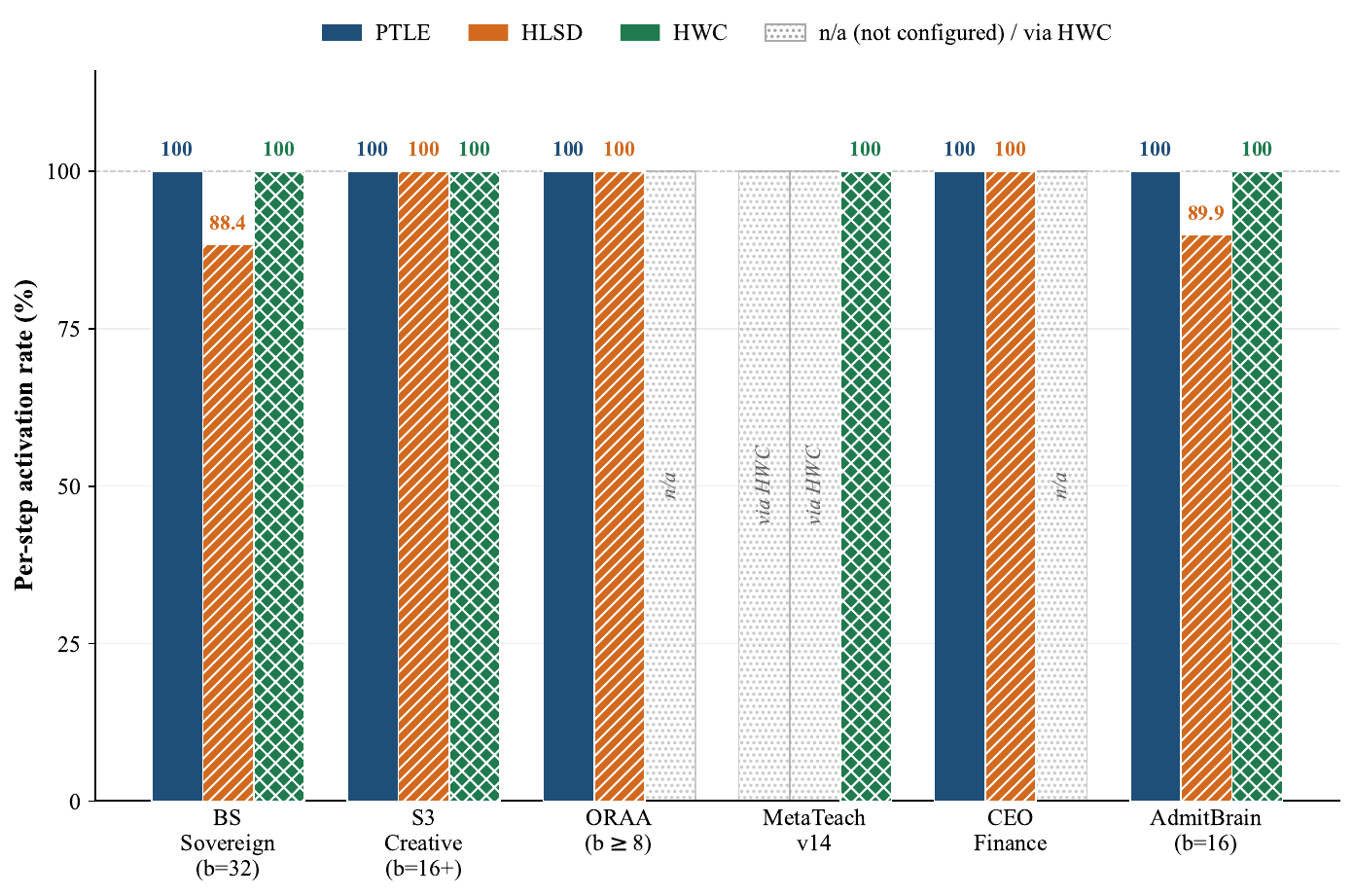}
\begin{flushleft}\small In Figure~\ref{fig:activation}, the per-step activation rate of PTLE, HLSD, and HWC across the six SLMs is measured as the fraction of optimizer steps at which each configured loss is non-zero (Extended Data Table~3). At batch sizes above the pair-formation threshold (Proposition~\ref{prop:pair-formation}) every configured loss activates, with the realised rate tracking in-batch sibling density (the six configurations charted here all exceed $88\%$; the sparsest deposited configuration fires at $2.2\%$ of steps); at $b=2$ the same losses are silent or partial (silent-failure regime, Extended Data Table~3). BS Sovereign integrates all three terms architecturally; MetaTeach integrates PTLE and HLSD via HWC; ORAA and CEO Finance do not configure HWC (shown as n/a). Lorentz constraint $\langle x, x \rangle_L = -1.000000$ preserved throughout; zero NaN across $\sim$317K cumulative optimizer steps (Supplementary Table~\ref{tab:M2_denominators}, \S\ref{si:B}). Rates are verified against the Zenodo-deposited cross-project activation matrix and cross-checked with Extended Data Table~3.\end{flushleft}
\end{figure}

\subsubsection{Cross-Project Activation Matrix}
\label{sec:cross-project-matrix}

\textbf{Batch size is not a speed variable in hyperbolic fine-tuning. It is a structural one.} Six SLMs $\times$ three loss components $\times$ batch regime (Extended Data Table~3) reveal two regimes differing by a single integer.

\textbf{Silent-failure dichotomy at micro-batch.} ORAA at $b{=}2$ records 0\% PTLE / 52\% HLSD; AdmitBrain v6 at $b{=}2$ records 0\% PTLE (1.2\% predicted) / 3.9\% HLSD. The hyperbolic losses appear in the loss equation, but the batch is too small to contain the sibling pairs they need; the model trains, the loss curves descend, and the structural signal is absent.

\textbf{Full activation under sufficient batch size.} Both projects transition to 100\% activation when the batch crosses the threshold (ORAA at $b{=}8$; AdmitBrain at $b{=}16$). BS Sovereign reaches the regime by architectural guarantee; S3 Creative R266 sustains it across 266 rounds; MetaTeach v14 and CEO Finance v2 ($b_{\text{eff}}{=}64$) reach 100\% on every configured loss at equilibrium. Per-step traces and per-loss equilibrium magnitudes in \S\ref{si:B}.

\subsubsection{Manifold Preservation and Zero-NaN Aggregate}
\label{sec:nan-zero}

The Lorentz constraint $\langle \pi(h), \pi(h) \rangle_L = -1$ is preserved exactly at post-training equilibrium in all six projects. \textbf{Zero NaN events} across $\sim$317K aggregate cumulative optimizer steps in six complete training runs (NaN-event definition and full run denominators in Supplementary Table~\ref{tab:M2_denominators}, \S\ref{si:B}). Manifold-preservation methodology in \S\ref{sec:methods}.

%% §2.5 Batch Size as a Structural-Learning-Quality Variable
\subsection{Batch Size as a Structural-Learning-Quality Variable}
\label{sec:batch-size}

\begin{figure}[htbp]
\centering
\caption{Batch size as a structural-learning-quality variable}\label{fig:batch-effect}
\includegraphics[width=0.95\linewidth]{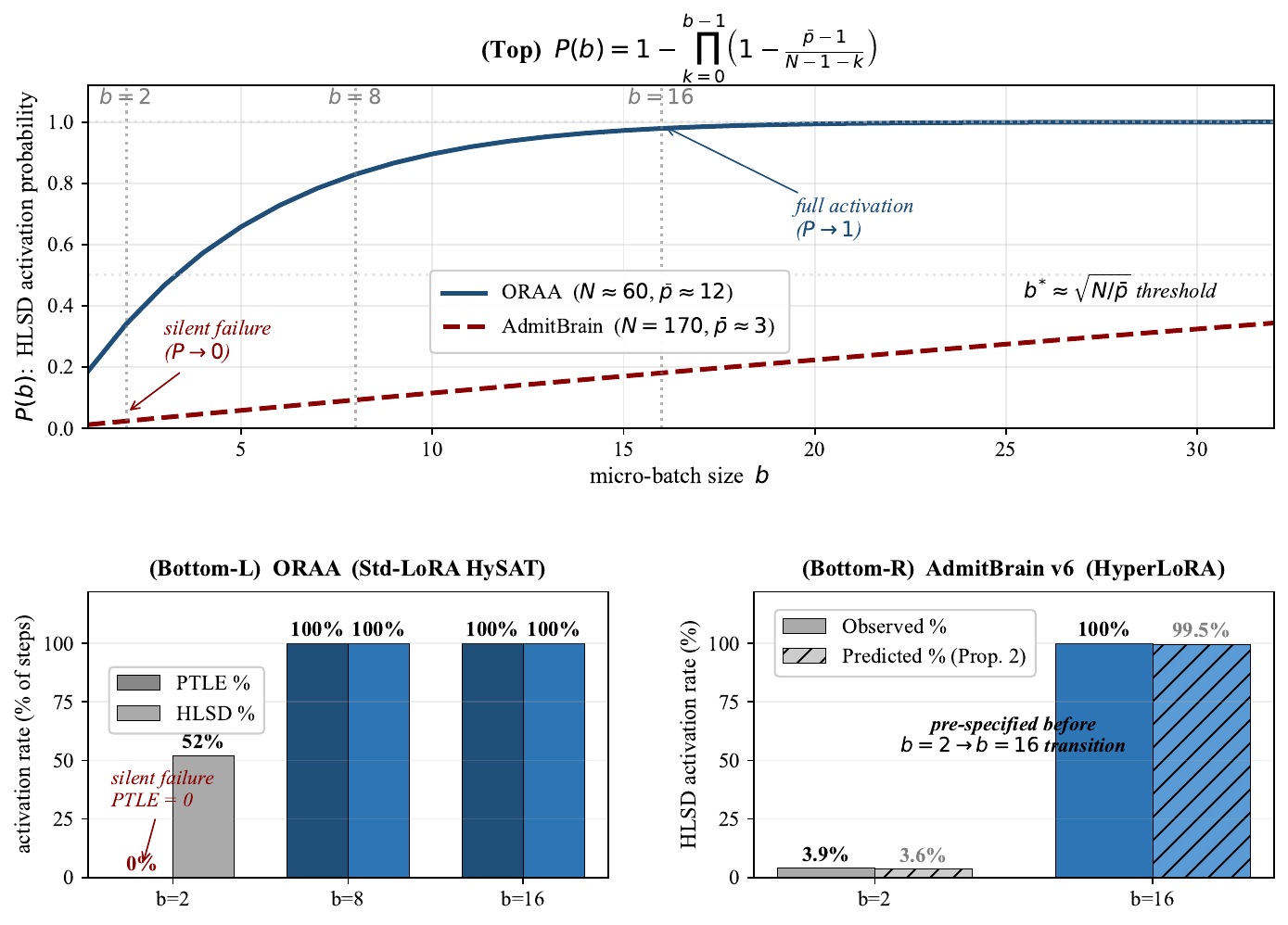}
\begin{flushleft}\small In Figure~\ref{fig:batch-effect}, ORAA's batch 2~$\to$~8~$\to$~16 transition and AdmitBrain v6's batch 2~$\to$~16 transition are shown as two independent natural experiments. HLSD activation rises from 52\% (ORAA $b{=}2$) and 3.9\% (AdmitBrain $b{=}2$) to 100\% above the pair-formation threshold (Proposition~\ref{prop:pair-formation}, $b^\star \geq \sqrt{N/\bar p}$). The hyperbolic-LLM literature treats batch size as a throughput variable; we identify it as a structural-learning-quality variable.\end{flushleft}
\end{figure}

\textbf{The discovery was an accident; the regularity is not.} Where the broader ML literature treats batch size as a generalisation variable under cross-entropy \citep{Smith2018BatchSize}, two unrelated projects revealed a structural role.

\textbf{Two natural experiments.} ORAA transitioned from $b{=}2$ (PTLE $=0$, HLSD $\sim$52\%) through $b{=}8$ to $b{=}16$ (HLSD 100\%, 120 contrastive pairs/step); AdmitBrain v6 transitioned from $b{=}2$ (HLSD 3.9\%) to $b{=}16$ (both 100\%). Both share the Llama 3.1 8B base but differ in adapter type, ontology density (5-phase vs 170-node), and domain. The dependency reproduces.

\begin{proposition}[Pair Formation Probability and HLSD Activation]
\label{prop:pair-formation}
Let $T$ be a domain-tree ontology with $N$ leaf nodes in $s$ sibling groups, sampled uniformly. \textbf{(i)} Under the Poisson approximation, $P(b) \approx 1 - \exp(-\binom{b}{2}(\bar p - 1)/(N-1))$ for $\bar p = N/s$; the load-bearing prediction (Figure~\ref{fig:batch-effect}) is regime separation (silent-fail at $b{=}2$ vs saturated at $b{\geq}16$, $\times 26$ amplification). \textbf{(ii)} Expected pairs $= \binom{b}{2}(\bar p - 1)/(N-1)$. \textbf{(iii) Heterogeneous refinement.} For groups $\{n_g\}$, the sampler-weighted density is $q = \sum_g n_g(n_g-1)/N(N-1) \geq (\bar p - 1)/(N-1)$ (Cauchy--Schwarz), tightening the prediction for non-uniform trees; the heterogeneity-corrected prediction matches observed activation to within $0.5$ percentage points for AdmitBrain, while the uniform-leaf prediction (Supplementary Table~\ref{tab:M4_pair_formation}, \S\ref{si:D}) supports regime separation rather than pointwise rate accuracy.
\end{proposition}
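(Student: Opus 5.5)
The plan is to treat a batch of size $b$ as $\binom{b}{2}$ unordered pairs of draws and count the pairs that fall in a common sibling group. Write $X=\sum_{1\le i<j\le b}\mathbf{1}[\,g(x_i)=g(x_j)\,]$, where $g$ maps a leaf to its sibling group. HLSD activates exactly when $X\ge 1$, so $P(b)=\Pr[X\ge1]$. Part (ii) follows from linearity of expectation: $\mathbb{E}[X]=\binom{b}{2}\,\Pr[g(x_1)=g(x_2)]$.

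For the uniform case I will read ``sampled uniformly'' as \emph{distinct} leaves drawn uniformly, so a micro-batch contains no duplicate samples. Given $x_1$, there are $N-1$ remaining leaves, of which $\bar p-1$ share its group when every group has size $\bar p=N/s$. This gives $\Pr[g(x_1)=g(x_2)]=(\bar p-1)/(N-1)$, which is the stated expectation.

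For part (i), I approximate $X$ by a Poisson variable with the same mean $\lambda=\binom{b}{2}(\bar p-1)/(N-1)$. This gives $P(b)\approx 1-e^{-\lambda}$. To justify the approximation I will invoke the Chen--Stein bound for sums of dissociated pair indicators, treating it as a standard tool. Its total-variation error is of order $\min(1,\lambda^{-1})\bigl(\sum_{i<j}p^2+\sum_{\text{overlapping pairs}}\mathbb{E}[I_{ij}I_{ik}]\bigr)$, which is $O(b^3(\bar p/N)^2)$ and hence small when $b^3\bar p^2\ll N^2$.

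This justification is the main obstacle. The overlapping-pair correlations are positive, because triangles inside one group are likely. So the approximation is only heuristic at large $b$, and I will state (i) as an approximation rather than an equality. The ``regime separation'' claim then needs only two monotonicity facts. The first is that at $b=2$, $P(2)=(\bar p-1)/(N-1)$ exactly, with no approximation. The second is that $\lambda$ grows by a factor $\binom{16}{2}/\binom{2}{2}=120$ between $b=2$ and $b=16$. The stated $\times26$ amplification of $P$ should then be read off by substituting the paper's $N,\bar p$ into $1-e^{-\lambda}$; it reflects saturation rather than the ratio of the $\lambda$ values. The threshold $b^\star\ge\sqrt{N/\bar p}$ quoted in Figure~\ref{fig:batch-effect} follows from setting $\lambda\approx 1$ and using $\binom{b}{2}\approx b^2/2$.

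For part (iii), with group sizes $\{n_g\}$ and the same distinct-uniform sampler, the collision probability is $\sum_g \frac{n_g}{N}\cdot\frac{n_g-1}{N-1}=q$, and $\mathbb{E}[X]=\binom{b}{2}q$. For the inequality, note that $\sum_g n_g(n_g-1)=\sum_g n_g^2-N$. By Cauchy--Schwarz, $\sum_g n_g^2\ge(\sum_g n_g)^2/s=N\bar p$. Hence $q\ge (N\bar p-N)/(N(N-1))=(\bar p-1)/(N-1)$, with equality iff all groups have equal size. The empirical $0.5$-point match for AdmitBrain is a data claim to be checked against Supplementary Table~\ref{tab:M4_pair_formation}, not proved.
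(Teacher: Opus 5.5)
Your treatment of (ii) and (iii) follows the route the statement itself signals: pair indicators with linearity of expectation, the collision probability $\sum_g \frac{n_g}{N}\cdot\frac{n_g-1}{N-1}$ under the distinct-leaf sampler (the reading that produces the $N-1$), and Cauchy--Schwarz on $\sum_g n_g^2$. Those steps are correct, including the equality case and the fact that $P(2)$ is exact (it equals $q$ in general).

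The step that fails is your plan for the quantitative content of (i). Take the AdmitBrain values the paper uses, $N=170$ and $\bar p\approx 3$, which reproduce its quoted $1.2\%$. The uniform formula then gives $\lambda(16)=120\cdot 2/169\approx 1.42$, hence $P(16)\approx 0.76$ and $P(16)/P(2)\approx 64$. So neither ``saturated at $b\ge 16$'' nor ``$\times 26$'' can be read off $1-e^{-\lambda}$ with uniform $\bar p$. Both come from the heterogeneity-corrected density of (iii). Since $P(2)=q$ exactly, take $q$ at the observed $b=2$ rate, $q\approx 0.039$; the statement puts the corrected prediction within $0.5$ points of it. This gives $P(16)\approx 1-e^{-120q}\approx 0.99$ and $P(16)/P(2)\approx 25$, i.e.\ the observed $3.9\%\to 100\%$. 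The saturation and amplification clauses of (i) therefore depend on (iii) and on the measured $q$. The uniform formula alone supports only the qualitative separation, which is what the statement's last clause concedes.

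Likewise, setting $\lambda\approx 1$ yields $b\approx\sqrt{2(N-1)/(\bar p-1)}$ ($=13$ for AdmitBrain), not $\sqrt{N/\bar p}\approx 7.5$. The latter corresponds to $\lambda\approx(\bar p-1)/(2\bar p)\le\tfrac12$, the birthday scale $b\sim\sqrt{s}$ with constants dropped. It should be presented as an order-of-magnitude rule.

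Your justification of the Poisson step also needs repair, although its conclusion survives. Chen--Stein for dissociated indicators requires independence across disjoint index sets, which fails for the distinct-leaf sampler you adopted. Moreover, in the uniform model the overlapping-pair covariances are zero (i.i.d.\ draws) or slightly negative (distinct leaves), not positive; positive correlation appears only for unequal $n_g$. You do not need Chen--Stein at all, because this is a birthday problem and the exact count gives
\[
\Pr[X=0]=\frac{\binom{s}{b}\,\bar p^{\,b}}{\binom{N}{b}}=\prod_{k=1}^{b-1}\Bigl(1-\frac{k(\bar p-1)}{N-k}\Bigr)\le\exp\Bigl(-\binom{b}{2}\frac{\bar p-1}{N-1}\Bigr),
\]
using $1-x\le e^{-x}$ and $N-k\le N-1$. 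Hence $1-e^{-\lambda}\le P(b)\le\lambda$ holds rigorously. For the reverse direction, $\log(1-x)\ge -x-x^2$ for $x\le\tfrac12$ gives $\Pr[X=0]\ge e^{-\lambda-O(b^3\bar p^2/N^2)}$ when $b\bar p$ is small compared with $N$. That recovers your error order elementarily, and with a known direction of error.
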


Full derivation, AdmitBrain pre-specified validation, ORAA post-hoc check, and the practitioner rule $b^\star \geq \sqrt{N/\bar p}$ appear in \S\ref{si:D}. \textbf{Repositioning.} Under hyperbolic supervision, a paper reporting $b < 8$ is reporting a different experiment than one reporting $b \geq 16$; micro-batch size and ontology-tree density should be reported as a joint hyperparameter.

\textbf{The hardware follows the structure.} Because the hyperbolic structural losses require sufficient within-batch pair formation to activate (a sibling pair for HLSD; tree-path pairs for PTLE; Proposition~\ref{prop:pair-formation}), the pair-formation threshold---not throughput---sets the micro-batch lower bound and hence the VRAM floor; the structural losses, not efficiency, drove the compute budget (full rationale in \S\ref{si:K3}).

%% §2.6 Scale Analysis Versus Prior Hyperbolic LLM Work
\subsection{Scale Analysis Against Prior Hyperbolic LLM Work}
\label{sec:scale}

\begin{figure}[htbp]
\centering
\caption{Scale landscape --- first stable hyperbolic SLM at 18-million-sample scale}\label{fig:scale}
\includegraphics[width=0.95\linewidth]{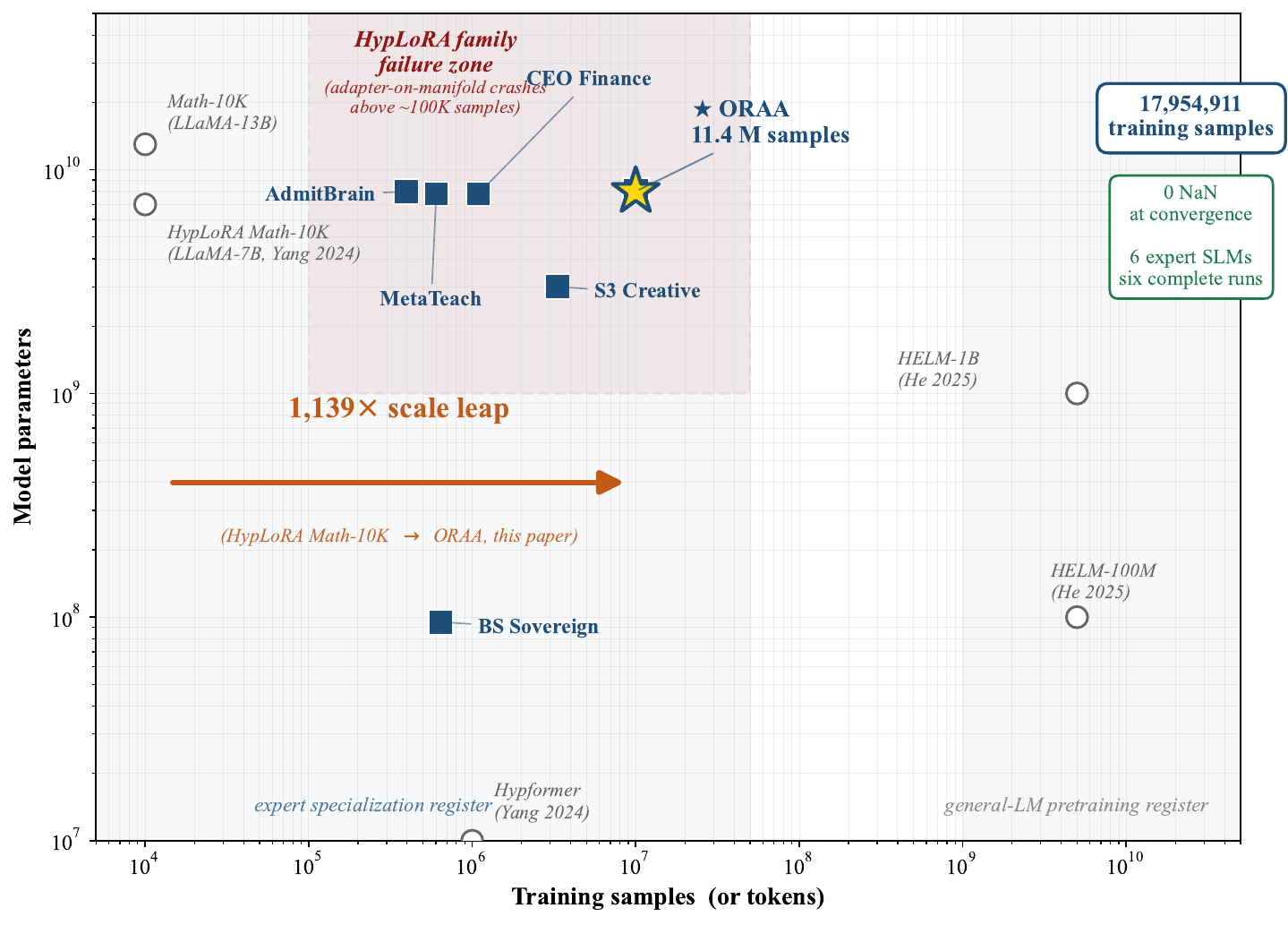}
\begin{flushleft}\small In Figure~\ref{fig:scale}, the scale landscape of the hyperbolic-LLM literature surveyed through 2025 Q4 \citep{Patil2025HypSurvey} is mapped (see \S\ref{sec:claims} for full claim qualification). $\bigstar$ ORAA (11.4~M samples, Llama 3.1 8B base) anchors the most-extreme single-project corpus, $1{,}139\times$ beyond HypLoRA Math-10K \citep{Yang2024HypLoRA}. The shaded region marks the historical \emph{HypLoRA family failure zone}; HySAT crosses this zone without crashing through loss-only placement (Proposition~\ref{prop:hysat-stability}). Fine-tuning register: HypLoRA's reported sets Math-10K (10{,}000) and Commonsense170K (170{,}420) vs HySAT cumulative six-project (17.95~M, $1{,}795\times$ and $105\times$ respectively). Pretraining register (complementary): HELM \citep{He2025HELM} trains a 1B fully-hyperbolic model on 5~B tokens. The fine-tuning and pretraining categories share no direct scale baseline \citep{Hu2022LoRA}. Aggregate inset: $0$ NaN at convergence, $6$ expert SLMs, six complete runs.\end{flushleft}
\end{figure}

\textbf{The scale leap was three orders of magnitude.} HypLoRA's reported fine-tuning corpora are Math-10K \citep{Yang2024HypLoRA} (10{,}000 samples) and Commonsense170K (170{,}420 samples); our six-project corpus totals 17{,}954{,}911 fine-tuning samples (per-project counts in Extended Data Table~2; provenance in SI-A) --- $1{,}795\times$ the Math-10K benchmark and $105\times$ even HypLoRA's largest reported set (Commonsense170K), with the single largest project (ORAA, 11.4~M) at $1{,}139\times$ and $67\times$ respectively. This is the fine-tuning register \citep{Hu2022LoRA}; HELM \citep{He2025HELM} --- a 1B fully-hyperbolic model on 5B tokens --- operates in the orthogonal pretraining register. The descriptive sample-efficiency landscape across the six systems and the HyperLoRA family is in Extended Data Fig~\ref{ed-fig:pareto}.

\textbf{Position within the Nature MI foundation-model landscape.} Where recent NMI foundation models \citep{Pai2024NMI, Feng2024NMI, He2025LucaOne, Xiao2025Densing} each establish scale or breadth within a single domain, we present the first six-domain expert-specialisation cross-validation under a unified placement principle. HySAT descends from a hierarchy-aware supervision tradition (order embeddings, hierarchical metric and contrastive learning; full author-by-author lineage in \S\ref{si:J}.6) and unifies, for the first time, three moves: a Lorentz substrate giving low-distortion tree embedding \citep{Sarkar2011}, loss-only placement decoupling curvature from gradient flow (Proposition~\ref{prop:hysat-stability}), and an operational 7--8B expert-SLM deployment register. All quantitative claims are supported by the released Zenodo per-step traces, logs, and reference scripts (\S\ref{sec:method-reproducibility}).

%% §2.7 Six Structural-First Claims and Their Evidence
\subsection{Six Structural-First Claims and Their Evidence}
\label{sec:claims}

The placement law is offered as a \emph{falsifiable} paradigm claim: it predicts that loss-only placement trains stably where adapter-on-manifold placement does not, and it is refuted by any counterexample of a curvature-unaware adapter-on-manifold run that matches HySAT stability at the 7--8B scale (refutation conditions in \S\ref{si:G}). On this basis we report six structural firsts, each ``first reported in the hyperbolic-LLM literature surveyed through 2025 Q4'' \citep{Patil2025HypSurvey}: (1) \textbf{scale} --- ORAA's 11{,}387{,}744 samples, $1{,}139\times$ HypLoRA's Math-10K and $67\times$ its largest reported set Commonsense170K \citep{Yang2024HypLoRA} ($1{,}795\times$ and $105\times$ cumulative); (2) \textbf{breadth} --- hyperbolic LLM fine-tuning from one expert domain to six; (3) \textbf{cross-family validation} --- identical HySAT configuration on Llama 3.1 8B and EXAONE 3.5 7.8B; (4) \textbf{four-way adapter cross-comparison} (from-scratch $\times$ full fine-tuning $\times$ Std-LoRA HySAT $\times$ HyperLoRA-with-stabilisation) in one coherent corpus; (5) the \textbf{cross-project activation matrix} (six SLMs $\times$ three loss components $\times$ batch regime) with a silent-failure dichotomy below the pair-formation threshold; and (6) the \textbf{repositioning of batch size} as a structural-learning-quality variable. Each claim's prior-work baseline, Highway-corpus value, and delta are consolidated in Extended Data Table~4; the full evidence chain, survey methodology, and the deposited search protocol enabling independent refutation are in \S\ref{si:M0}--\S\ref{si:M6}. Pre-specified H1--H6 directional outcomes are in Extended Data Fig~4; the cost-effectiveness Pareto frontier in Extended Data Fig~5.

\paragraph{Claim scope.} \emph{HySAT establishes:} (i) it injects tree-structured supervision through loss-layer Lorentz geometry; (ii) all six constructed expert SLMs trained to completion with zero NaN over $\sim$317K cumulative optimizer steps --- the three loss-only HySAT systems without additional stabilisation, the from-scratch anchor under curvature-aware optimisation, and the two HyperLoRA systems under the Proposition~\ref{prop:hysat-stability}(ii) conditions; (iii) the seventeen adapter-on-manifold attempts failed under the identifiable missing stabilisation conditions of Proposition~\ref{prop:hysat-stability}(ii), while the two that succeeded satisfied at least one; and (iv) batch size controls structural pair availability for HLSD/PTLE activation. \emph{Scope:} the placement law holds for tree-structured expert domains at the 7--8B regime, where the six SLMs establish expert specialisation within each target domain. The structural signal injected by HySAT is distinguished from a generic regulariser by the matched-ablation non-monotonicity: coarse binary HWC underperforming no HWC supervision at a matched training budget (\S\ref{si:G}), a pattern a simple regulariser cannot produce.

%% =================================================================
%% §3 DISCUSSION
%% =================================================================

\section{Discussion}
\label{sec:discussion}

%% §3 Discussion
%% single-author rationale -> SI-K + cover letter; companion -> End Matter; full enumeration in SI-N.
A theory survives by occupying the territory it maps. The four theoretical pillars (\S\ref{sec:four-pillars}) and the cartographic razor-cell diagnosis (companion manuscript, under review) specify where the Euclidean substrate reaches a structural wall, and the six expert SLMs (\S\ref{sec:six-slms}) occupy that territory: structural novelty via HySAT placement, human-complementary usefulness via the cross-project activation matrix, and institutional compatibility via Zenodo deposit, gated access, and four operational deployments including a controlled admissions-counselling deployment. The pace of expert-domain AI deployment has outrun the governance infrastructure that prices substrate-level capability limits into deployment decisions \citep{Jobin2019NMI, Roberts2023NMI}; the remedy is architectural: interventions that admit domain-tree knowledge into the training signal without rebuilding the base. Architectural alternatives to single-substrate API-wrapper deployment are feasible at the 7--8B-parameter, six-domain, eighteen-million-sample regime, and the governance discussion can now weigh them on technical merits.

The finding sits within a broader pattern in which a seemingly minor architectural choice carries a strong inductive bias: just as the choice of single-unit activation function reshapes representation geometry and out-of-distribution behaviour in recurrent networks \citep{TolmachevEngel2025NMI}, the layer at which curvature touches the gradient --- a choice invisible at benchmark scale --- separates stable training from divergence at expert-domain scale. HySAT generalises beyond this corpus. Across Poincar\'e embeddings, HGCN, HNN++, MERU, and HyperCLIC \citep{NickelKiela2017, Chami2019HGCN, Shimizu2021HNNpp, Desai2023MERU, Ayoughi2025HyperCLIC} the same pattern holds --- hyperbolic geometry contributes most where hierarchical structure dominates --- which Lemma~\ref{lem:placement} formalises and the six SLMs demonstrate at up to $1{,}795\times$ HypLoRA's Math-10K benchmark (and $105\times$ its largest reported corpus, Commonsense170K), with independent prior art \citep{Mishne2023NumStab, Klein2025HypRL, Cafaro2007Jacobi} confirming the mechanism is general. The seventeen HyperLoRA crashes ($\sim$220 GPU-hours) are part of the contribution: they specify the Proposition~\ref{prop:hysat-stability}(ii) failure mode, and in a literature that converges on architectures by elimination as much as by validation, the elimination evidence is community infrastructure rather than a transparency gesture. Distributed orchestration is the most surprising result: ORAA Dual-SLM ($9/10$) and CEO Finance Triple-Coach ($30/30$ vs GPT-4o, three-judge leave-one-out ensemble) show that curator-orchestrated SLM experts can outperform single experts, with a non-monotonic one--two--three-coach pattern \citep[cf.][]{Akiba2025NMI} supported by the deposited per-judge verdict aggregates. Delineating the scope and boundary conditions of this orchestration effect is the clearest target for follow-up.

Limitations, future directions, ethics, the replication-transparency rationale, and the companion trait-level study (\emph{JARVIS Is Near You}, concurrent submission to this journal) are detailed in \S\ref{si:N1}--\S\ref{si:N3}, \S\ref{si:K}, and the End Matter.

%% =================================================================
%% §4 METHODS
%% =================================================================

\section{Methods}
\label{sec:methods}

%% §4 Methods
\subsection{HySAT Mathematical Formulation}
\label{sec:method-formulation}

The HySAT formulation specifies, in this order, the manifold and projection (\S\ref{sec:method-formulation}), the three structural losses (PTLE, HLSD, HWC), the total HySAT objective with per-project weights, the scale $\tau$ and clamp that make forward-pass stability invariant to the residual-stream norm distribution, and the two formal results that govern stability (Proposition~\ref{prop:hysat-stability}) and convergence (Proposition~\ref{thm:hysat-convergence}). The pipeline is visualised in Extended Data Fig~\ref{ed-fig:pipeline}.

Let $h \in \mathbb{R}^d$ denote a hidden-state vector produced by the Euclidean base model + standard-LoRA adapter at layer $\ell$ (Extended Data Fig~3). We work in the Lorentz model of hyperbolic geometry of curvature $c = 1$: the ambient space is $\mathbb{R}^{d+1}$, and the Lorentz manifold is
\begin{equation*}
\mathcal{H}^d = \{x \in \mathbb{R}^{d+1} : \langle x, x \rangle_L = -1, \; x_0 > 0\}
\end{equation*}
where $\langle x, y \rangle_L = -x_0 y_0 + \sum_{i=1}^d x_i y_i$ is the Minkowski inner product \citep{Minkowski1908}.

\textbf{Exponential map at origin.} For $v \in T_0 \mathcal{H}^d \subset \mathbb{R}^d$,
\begin{equation*}
\exp_0(v) = \bigl(\cosh(\lVert v \rVert),\; \sinh(\lVert v \rVert) \cdot v / \lVert v \rVert\bigr) \in \mathcal{H}^d.
\end{equation*}
The Lorentz projection $\pi : \mathbb{R}^d \to \mathcal{H}^d$ of a hidden state $h$ is $\pi(h) = \exp_0(h/\tau)$ for a scale $\tau$ discussed below.

\textbf{Lorentz distance.} For $x, y \in \mathcal{H}^d$,
\begin{equation*}
d_L(x, y) = \text{arcosh}(-\langle x, y \rangle_L).
\end{equation*}

\textbf{PTLE loss.} Given a mini-batch $\{h_1, \dots, h_B\}$ with associated tree paths $\{t_1, \dots, t_B\}$,
\begin{equation*}
L_{\text{PTLE}} = \frac{1}{|P|} \sum_{(i,j) \in P} \bigl(d_L(\pi(h_i), \pi(h_j)) - d_{\text{tree}}(t_i, t_j)\bigr)^2,
\end{equation*}
where $P$ is the set of pairs in the batch with annotated tree paths, and $d_{\text{tree}}$ is the normalised ontology-tree distance (see below).

\textbf{HLSD loss.} For siblings $S = \{(i, j) : \text{parent}(t_i) = \text{parent}(t_j),\; t_i \neq t_j\}$,
\begin{equation*}
L_{\text{HLSD}} = \frac{1}{|S|} \sum_{(i, j) \in S} d_L(\pi(h_i), \pi(h_j))^2.
\end{equation*}
If $|S| = 0$ in a micro-batch, $L_{\text{HLSD}} = 0$ (the batch-size dependency of §\ref{sec:batch-size}).

\textbf{HWC loss (continuous).} Define the continuous ancestry weight $w(i, j) = |\text{shared\_ancestors}(t_i, t_j)| / \text{max\_depth}$. Then
\begin{equation*}
L_{\text{HWC}} = -\sum_i \log \frac{\exp(w(i, j_i^+) \cdot s(i, j_i^+))}{\sum_k \exp(w(i, k) \cdot s(i, k))},
\end{equation*}
where $s(\cdot, \cdot)$ is the cosine similarity of Lorentz embeddings and $j_i^+$ is the positive sample for $i$. A binary substitute replaces the continuous $w$ with 1 if a shared ancestor exists and 0 otherwise; the binary form is referenced here for completeness but is not the operational variant used (the continuous formulation is used throughout). \textbf{HWC unifies PTLE and HLSD}: its numerator supplies the PTLE-style attraction of tree-near pairs and its denominator supplies the HLSD-style separation of tree-far pairs, under one continuous tree-distance weight $w$. A project therefore uses \emph{either} separate PTLE$+$HLSD terms \emph{or} the unified HWC term --- MetaTeach v14 adopts HWC in place of separate PTLE/HLSD; ORAA, CEO Finance, and AdmitBrain use separate PTLE$+$HLSD; BS Sovereign and S3 carry all three as architectural constraints.

\textbf{Total HySAT loss.} For the two Std-LoRA-HySAT projects (ORAA, CEO Finance) and the full-fine-tuning HySAT project (S3 Creative); BS Sovereign uses its own full-hyperbolic objective and the two HyperLoRA projects use the same loss form with the placement variation discussed in §\ref{si:L4}--§\ref{si:L6}:
\begin{equation*}
L = L_{\text{CE}} + \lambda_{\text{PTLE}} \cdot L_{\text{PTLE}} + \lambda_{\text{HLSD}} \cdot L_{\text{HLSD}} + \lambda_{\text{HWC}} \cdot L_{\text{HWC}},
\end{equation*}
with per-project weights given in Supplementary Table~\ref{tab:M1_hyper} (\S\ref{si:A}; typically $\lambda_{\text{PTLE}}, \lambda_{\text{HLSD}}, \lambda_{\text{HWC}} \approx 0.02$--$0.05$).

\textbf{PTLE dimensional matching.} The PTLE loss equates a continuous Lorentz distance $d_L$ with a discrete tree-edit distance $d_{\text{tree}}$. To make the equation dimensionally meaningful, $d_{\text{tree}}$ is normalised to the tree diameter (longest root-to-leaf path) before the comparison; this scales $d_{\text{tree}}$ to the unit interval. The scaling is fixed per project and is not a learned parameter.

\textbf{The scale $\tau$ and the $\lVert v \rVert \leq 2$ clamp.} The scaling $\tau = 2$ is fixed across all six projects; it is the normalisation scale that determines where the clamp $\lVert v \rVert \leq 2$ engages, \emph{not} a claim about the raw hidden-state norm distribution. Direct measurement on EXAONE-3.5-7.8B and Qwen-2.5-7B (Llama-family RMSNorm+GQA structural reference --- architecturally comparable, though absolute norm scales differ across models, so this is a structural not a quantitative proxy; Llama-3.1 weights HuggingFace-gated at submission time) shows that modern residual streams are heavy-tailed (\S\ref{si:C3}: p99 reaching $10^3$--$10^4$ at outlier-token positions \citep{Dettmers2022Outlier, Gurnee2024Outlier}). The clamp projects all mid-stack inputs to the well-conditioned argument $\lVert v \rVert = 2$ ($\cosh(2) \approx 3.76$) before $\exp_0$ evaluation, making forward-pass stability \textbf{invariant to the residual-stream norm distribution}. This invariance carries an explicit cost: because $\exp_0$'s geodesic radius equals $\lVert v \rVert$, the $\lVert v \rVert \leq 2$ clamp compresses the radial coordinate that encodes hierarchy depth on the Lorentz manifold, so the structural losses operate primarily on angular tree relations rather than on depth---a deliberate stability-for-geometry trade-off consistent with the matched-ablation result (\S\ref{si:E4}) and characterised here as an explicit property of loss-only placement. A $\tau \in \{1, 2, 4, 8\}$ within-project sweep is a priority replication target (§\ref{si:N2}); the stability claim (Proposition~\ref{prop:hysat-stability}) does not depend on the exact $\tau$ so long as the clamp remains engaged. The manifold-drift tolerance $|\langle \pi(h), \pi(h) \rangle_L + 1| < 10^{-4}$ is set conservatively below the bf16 per-operation roundoff bound ($2^{-7} \approx 7.8 \times 10^{-3}$): drifts above this stricter threshold (i.e.\ exceeding $10^{-4}$ but well within bf16's per-operation precision) signal structural accumulation across many gradient steps rather than per-operation float noise, and trigger rescaling. Full derivation, calibration, and measurement protocol in \S\ref{si:C3}; cross-family sensitivity analysis (Llama 3.1 vs EXAONE 3.5 vs Qwen 2.5 RMSNorm references) in \S\ref{si:C}.

\textbf{Restatement of Proposition~\ref{prop:hysat-stability} (HySAT Stability) for the Methods reader.} The formal statement appears in \S2.1.4 of the main text; the load-bearing operational content is reproduced inline here for the convenience of a reader following the Methods linearly. Let $B_t \in \mathbb{R}^{d \times r}$ denote the output matrix of a rank-$r$ LoRA adapter at training step $t$ under clipped Euclidean first-order updates (AdamW/SGD family) with learning rate $\eta_t$ whose post-clipping update norm is bounded by $\eta_t\kappa$, and total loss $L_{\text{total}} = L_{\text{CE}} + \sum_k \lambda_k L_k^{\text{hyp}}$ where each $L_k^{\text{hyp}}$ involves the Lorentz projection $\pi(h) = \exp_0(h/\tau)$ applied \textbf{only} at the loss evaluation. Part~(i) (HySAT placement) gives the Euclidean-LoRA bound $\mathbb{E}\bigl[\lVert B_t \rVert_F\bigr] \leq \lVert B_0 \rVert_F + \kappa \sum_{s=0}^{t-1} \eta_s$, independent of Lorentz curvature $c$ \citep{Hu2022LoRA}. Part~(ii) (HypLoRA placement, naive implementation) inherits the unbounded $\sinh(\lVert B \cdot x \rVert)/\lVert B \cdot x \rVert$ Jacobian factor in $\partial L/\partial B_t^{\text{HyL}}$, yielding a positive-feedback adapter-norm-amplification loop that terminates in NaN at the practical bf16 stability boundary of $\cosh$ at argument $\sim 11$, where the Jacobian-amplification factor $\sinh(\|v\|)/\|v\| \approx 2.7 \times 10^{3}$ exits the bf16-stable regime well before the absolute $\cosh$-overflow at argument $\sim 89$ (full derivation in SI-E.2; the practical training-failure boundary is the Jacobian-amplification cascade, not absolute float overflow). The HypLoRA failure mode is \textbf{absent} when any of three stabilisation conditions holds: (ii.a) curvature-aware optimisation, e.g.\ Riemannian Adam \citep{Bonnabel2013, Absil2008}; (ii.b) an explicit $B$-norm constraint $\lVert B \rVert_F \leq B_{\max}$ after each step; or (ii.c) retraction $R_\theta(v) = \theta + v$ \citep[][\S3]{NickelKiela2017} with full-sequence loss and micro-batch size above the sibling-pair-formation threshold of Proposition~\ref{prop:pair-formation}.

\begin{proof}[Proof sketch of Proposition~\ref{prop:hysat-stability}]
(i) The Euclidean AdamW update rule applied to $B_t$ has gradient signal bounded by the loss-Jacobian with respect to the Euclidean hidden states $h$; gradient clipping at $\kappa$ yields the stated bound. Lorentz curvature $c$ never enters the adapter-update rule because $B_t$ does not live on the manifold. (ii) For a HyperLoRA adapter, the Jacobian of $\exp_0$ has operator norm $\sinh(\lVert B_t x \rVert) / \lVert B_t x \rVert$ (Lorentz exponential-map differential), yielding a positive-feedback loop that terminates at the practical bf16 Jacobian-amplification instability boundary $\sim$11 (absolute $\cosh$ overflow near $\sim$89). Full Lorentz-model working in SI-E, together with the seventeen-incident ledger, the one preserved raw trajectory (AdmitBrain E2), and the deposited crash-reproduction script.
\end{proof}

\begin{proposition}[HySAT Convergence Rate --- Sufficient Conditions]
\label{thm:hysat-convergence}
Let $L_{\text{total}}(\theta) = L_{\text{CE}}(\theta) + \sum_k \lambda_k L_k^{\text{hyp}}(\pi(h(\theta)))$ denote the HySAT objective with $\theta$ collecting all Euclidean adapter parameters $B_t$, base-model frozen weights, and hidden-state computation $h$. Assume: (a) $L_{\text{CE}}$ is $L_{\text{CE}}$-smooth in $\theta$; (b) each $L_k^{\text{hyp}}$ is $L_k^{\text{loss}}$-smooth in its projected argument; (c) the Lorentz projection $\pi$ satisfies the bounded-Jacobian property of Proposition~\ref{prop:hysat-stability}(i), $\lVert \partial \pi/\partial h \rVert_{\text{op}} \leq M_\tau$; (d) gradient clipping is applied at norm $\kappa$; (e) gradient noise has bounded second moment $\sigma^2$. Then HySAT trained by clipped Euclidean first-order updates (AdamW/SGD family) with diminishing step size $\eta_t = \eta_0/\sqrt{t}$ achieves
\begin{equation*}
\min_{1 \leq t \leq T}\, \mathbb{E}\bigl[\lVert \nabla L_{\text{total}}(\theta_t) \rVert^2\bigr] \;\leq\; \mathcal{O}\!\left(\frac{L_{\text{HySAT}} \cdot (\kappa^2 + \sigma^2)}{\sqrt{T}}\right),
\end{equation*}
where $L_{\text{HySAT}} = L_{\text{CE}} + \sum_k \lambda_k \, M_\tau \, L_k^{\text{loss}}$. Equivalently, HySAT reaches an $\epsilon$-stationary point in $T = \mathcal{O}(L_{\text{HySAT}}^2 (\kappa^2 + \sigma^2)^2 / \epsilon^4)$ gradient steps, identical (up to the constant $L_{\text{HySAT}}$) to the standard non-convex SGD rate \citep{Bottou2018ReviewSGD, Ghadimi2013NonConvex} for Euclidean LoRA \citep{Hu2022LoRA}. The HypLoRA-style placement of Proposition~\ref{prop:hysat-stability}(ii) does \emph{not} satisfy assumption (c): the adapter-internal $\sinh(\lVert B x \rVert)/\lVert B x \rVert$ Jacobian is unbounded as $\lVert B_t \rVert_F$ grows, the smoothness constant $L_{\text{HypLoRA}}$ diverges, and Proposition~\ref{thm:hysat-convergence} does not apply.
\end{proposition}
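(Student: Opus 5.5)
The plan is to reduce Proposition~\ref{thm:hysat-convergence} to the standard non-convex SGD descent argument \citep{Ghadimi2013NonConvex, Bottou2018ReviewSGD}. The only HySAT-specific work is to show that the composite objective $L_{\text{total}}$ is $L_{\text{HySAT}}$-smooth in $\theta$, and that the clipped first-order update has bounded second moment.

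\textbf{Step 1 (smoothness of the composite).} By the chain rule,
\[
\nabla_\theta L_k^{\text{hyp}}(\pi(h(\theta))) = J_h(\theta)^\top J_\pi(h)^\top \nabla L_k^{\text{hyp}}(\pi(h)).
\]
Using assumption (c), $\lVert J_\pi\rVert_{\text{op}}\le M_\tau$. This holds because the clamp $\lVert h/\tau\rVert\le 2$ confines $\exp_0$ to a compact ball, on which $\sinh(r)/r$ and $\cosh$ are bounded; this is the $M_\tau\le 5.23/\tau$ property cited from \S\ref{si:A7}. Together with (b), this gives a Lipschitz bound on the hyperbolic gradient of the form $\lambda_k M_\tau L_k^{\text{loss}}$. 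Strictly, this bound also needs Lipschitz control of $J_\pi$ and of $h(\theta)$. I would handle that as follows. The second derivatives of $\exp_0$ on the clamped ball are also bounded constants. The frozen base plus the LoRA map is smooth on the bounded parameter set supplied by Proposition~\ref{prop:hysat-stability}(i), since $\lVert B_t\rVert_F\le\lVert B_0\rVert_F+\kappa\sum\eta_s$. I would absorb all such constants into the $\mathcal{O}(\cdot)$. Adding (a) then yields $L_{\text{HySAT}}=L_{\text{CE}}+\sum_k\lambda_k M_\tau L_k^{\text{loss}}$.

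\textbf{Step 2 (descent inequality).} From $L_{\text{HySAT}}$-smoothness,
\[
L(\theta_{t+1})\le L(\theta_t)-\eta_t\langle\nabla L(\theta_t),g_t\rangle+\tfrac{L_{\text{HySAT}}}{2}\eta_t^2\lVert g_t\rVert^2 .
\]
By (d) and (e), $\mathbb{E}\lVert g_t\rVert^2\le\kappa^2+\sigma^2$ up to constants. The inner-product term needs $\mathbb{E}\langle\nabla L,g_t\rangle\ge c\,\lVert\nabla L\rVert^2$ minus a bias term. Clipping introduces that bias, and I would control it by the standard clipped-SGD lemma: the bias is bounded by $\sigma^2/\kappa$ whenever $\lVert\nabla L\rVert\le\kappa/2$. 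For AdamW, I would treat the preconditioner as bounded above and below, with its eigenvalues absorbed into the constants. I would not attempt an adaptive-method analysis.

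\textbf{Step 3 (telescoping).} Summing from $t=1$ to $T$ and using $L\ge0$ gives
\[
\sum_t\eta_t\,\mathbb{E}\lVert\nabla L\rVert^2\le L(\theta_1)+\tfrac{L_{\text{HySAT}}}{2}(\kappa^2+\sigma^2)\sum_t\eta_t^2 .
\]
With $\eta_t=\eta_0/\sqrt t$ we have $\sum\eta_t\sim\sqrt T$ and $\sum\eta_t^2\sim\log T$. Choosing $\eta_0\propto 1/\sqrt{L_{\text{HySAT}}}$, or a constant $\eta=\eta_0/\sqrt T$ to drop the logarithm, bounds the minimum by $\mathcal{O}(L_{\text{HySAT}}(\kappa^2+\sigma^2)/\sqrt T)$. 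Inverting this gives the $\epsilon^{-4}$ iteration count. For the HypLoRA remark, I would note that in that placement the $\sinh(\lVert Bx\rVert)/\lVert Bx\rVert$ factor sits inside $h(\theta)$ with no clamp. Its supremum over an unbounded $B$-set is therefore infinite, so (c) fails and Step 1 yields no finite constant.

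\textbf{Main obstacle.} The hardest part is Step 1, namely a genuinely global smoothness constant. Smoothness of $\theta\mapsto h(\theta)$ through a frozen transformer is not implied by (a)--(e). In addition, the clamp $\lVert v\rVert\le2$ is only Lipschitz, not $C^1$, at the boundary. My first attempt would restrict to the compact sublevel set on which Proposition~\ref{prop:hysat-stability}(i) confines the iterates. I would replace the hard clamp by a smooth radial saturation that agrees with it off a neighbourhood of the boundary. On that set I would bound the Hessian constants, and I would state explicitly that the rate constant hides these base-model-dependent factors.
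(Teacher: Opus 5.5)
Your route is the paper's own. Its sketch bounds the Hessian of $L_k^{\text{hyp}}\circ\pi$ by the chain rule, obtaining $L_k^{\text{loss}}M_\tau^2$ plus $\lVert\nabla L_k^{\text{hyp}}\rVert\,\lVert\nabla^2\pi\rVert$ with the latter controlled by the clamp. It then invokes ``the standard non-convex SGD analysis with gradient clipping''. You are more careful than the paper on two points. First, (a)--(e) say nothing about smoothness of $\theta\mapsto h(\theta)$; the sketch only establishes smoothness in $h$. Second, the hard clamp's Jacobian jumps across $\lVert v\rVert=2$, so $\nabla^2\pi$ does not exist there, and your smooth radial saturation is a legitimate fix for this.

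The step that fails is the passage from Step~2 to Step~3. You correctly note that clipping biases $g_t$, by $\mathcal{O}(\sigma^2/\kappa)$ when $\lVert\nabla L\rVert\le\kappa/2$. But the telescoped inequality of Step~3 silently drops that term. The bias does not shrink with $t$. Restoring $\sum_t\eta_t\lVert\nabla L(\theta_t)\rVert\cdot\mathcal{O}(\sigma^2/\kappa)$ and dividing by $\sum_t\eta_t$ leaves, after Young's inequality, a floor of order $(\sigma^2/\kappa)^2$ on $\min_t\mathbb{E}\lVert\nabla L(\theta_t)\rVert^2$ that no $T$ removes.

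This is not an artefact of the analysis. With a fixed threshold and merely bounded-variance noise, clipped SGD can converge to non-stationary points (Koloskova, Hendrikx and Stich, 2023). Take $L(x)=x^2/2$ with two-point noise $+a$ w.p.\ $p$ and $-b$ w.p.\ $1-p$, where $pa=(1-p)b$, $a>\kappa>b$ and $p\le 1/2$. The mean clipped update $\mathbb{E}[\mathrm{clip}_\kappa(x+\xi)]$ vanishes at $x^\ast=p(a-\kappa)/(1-p)\neq0$, and that is where the iterates go. So the $\mathcal{O}(1/\sqrt{T})$ conclusion is not derivable from (a)--(e) as stated, by your argument or by the paper's, which hides the same issue. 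You need either an extra hypothesis that makes clipping inert, or a conclusion that carries an additive non-vanishing bias term. One such hypothesis is that stochastic gradients are almost surely bounded by $\kappa$, so that $\mathbb{E}[g_t\mid\theta_t]=\nabla L(\theta_t)$ and $\lVert g_t\rVert\le\kappa$.

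Your handling of the ``main obstacle'' also does not close it. Proposition~\ref{prop:hysat-stability}(i) puts the iterates in a ball of radius $\lVert B_0\rVert_F+\kappa\sum_{s<t}\eta_s\approx\lVert B_0\rVert_F+2\kappa\eta_0\sqrt{t}$ around the initialisation. That is not a sublevel set, and not any $T$-independent compact set. The local Lipschitz and smoothness moduli of the frozen-base-plus-LoRA map on that ball may therefore grow with $T$, and absorbing them into $\mathcal{O}(\cdot)$ alters the rate.

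The clean repair, which the paper makes tacitly, is to assume $h$ globally $\ell_h$-Lipschitz and $\beta_h$-smooth in $\theta$. The chain rule then gives the constant $L_{\text{CE}}+\sum_k\lambda_k\bigl(L_k^{\text{loss}}M_\tau^2\ell_h^2+G_kC_\pi\ell_h^2+G_kM_\tau\beta_h\bigr)$. Here $G_k$ bounds $\lVert\nabla L_k^{\text{hyp}}\rVert$ on the compact image of the clamped $\pi$, and $C_\pi$ is the Lipschitz modulus of $J_\pi$ for your smoothed clamp. Note the $M_\tau^2$. Your Step~1 asserts a factor $M_\tau L_k^{\text{loss}}$ to match the stated $L_{\text{HySAT}}$, but the chain rule (and the paper's own sketch) gives $M_\tau^2$. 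The stated constant therefore holds only up to $\tau$-dependent factors.

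Two smaller points remain. Under $\eta_t=\eta_0/\sqrt{t}$ itself, your Step~3 gives $\mathcal{O}(\log T/\sqrt{T})$; removing the logarithm needs the constant step $\eta_0/\sqrt{T}$, which is not the stated schedule. And the AdamW half of the claim is not reached by a bounded-preconditioner argument. Adam's preconditioner depends on the current $g_t$, so $\mathbb{E}\langle\nabla L,P_tg_t\rangle\neq\langle\nabla L,P_t\nabla L\rangle$. Decoupled weight decay also shifts the fixed points away from stationary points of $L_{\text{total}}$.
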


\begin{proof}[Proof sketch]
The HySAT loss decomposes as $L_{\text{total}} = L_{\text{CE}} + \sum_k \lambda_k (L_k^{\text{hyp}} \circ \pi \circ h)$. Smoothness of the composition $L_k^{\text{hyp}} \circ \pi$ in $h$ follows from chain rule: $\lVert \nabla^2 (L_k^{\text{hyp}} \circ \pi) \rVert \leq L_k^{\text{loss}} \lVert \partial \pi/\partial h \rVert^2 + \lVert \nabla L_k^{\text{hyp}} \rVert \cdot \lVert \nabla^2 \pi \rVert$. Both factors on the right are bounded: the first by $L_k^{\text{loss}} M_\tau^2$ via assumption (c); the second by a constant determined by the clamp argument $\lVert v \rVert \leq 2$ (Methods \S\ref{sec:method-formulation}). Substituting into the standard non-convex SGD analysis \citep{Bottou2018ReviewSGD, Ghadimi2013NonConvex} with gradient clipping at $\kappa$ yields the stated rate. The HypLoRA-style failure follows from observing that assumption (c) is violated for any HypLoRA-style placement: the adapter Jacobian operator norm $\sinh(\lVert B x \rVert)/\lVert B x \rVert$ is unbounded above as $\lVert B_t \rVert_F$ grows, no finite Lipschitz constant exists, and the convergence rate $\sqrt{T}^{-1}$ scales with a divergent factor. The empirical 17-crash signature (SI-E.2) is the operational signature of this divergence: the run terminates at the practical bf16 Jacobian-amplification instability boundary $\lVert v \rVert \approx 11$ before any nominal convergence horizon. Full proof in \S\ref{si:A8}.
\end{proof}

\textbf{Convergence-rate consequence: within the standard curvature-unaware AdamW regime, the separation between HySAT and HypLoRA placements is regime-conditional rather than asymptotic.} Proposition~\ref{thm:hysat-convergence} establishes that HySAT inherits the standard non-convex SGD rate ($T = \mathcal{O}(\epsilon^{-4})$); under the same standard-AdamW assumptions HypLoRA admits no analogous rate because the smoothness assumption is violated by construction (the separation is removed by changing the optimiser, e.g.\ Riemannian Adam, not by hyperparameter tuning). The architectural distinction (loss-only vs adapter-on-manifold) is therefore qualitative: \emph{convergence is guaranteed for HySAT under standard assumptions; no such guarantee can hold for HypLoRA under the same assumptions, regardless of hyperparameter choice within the manifold-placement regime under standard curvature-unaware AdamW}. This categorical separation holds strictly within the standard-optimiser regime and is fully consistent with the conditional claim stated below: the stabilisation conditions (ii.a)--(ii.c) restore convergence by changing the optimiser or the constraint set, not a hyperparameter --- that is, by leaving the regime in which the impossibility holds. The separation is categorical with respect to hyperparameters, not with respect to optimiser class. This is consistent with the geometric mechanism documented by \citet{Cafaro2007Jacobi} (Jacobi-field exponential divergence on negatively-curved manifolds) and the float-precision saturation regime of \citet{Mishne2023NumStab}.

\textbf{Backward-path semantics (gradient implementation rule).} To make Proposition~\ref{prop:hysat-stability}(i) operational and unambiguous, we specify the autograd contract used in all six HySAT projects. Forward: $h \in \mathbb{R}^d$ is computed by the Euclidean adapter, $\pi(h) = \exp_0(\mathrm{clamp}(h/\tau,\,\lVert\cdot\rVert\leq 2))$ is computed only at loss evaluation, and $L_{\text{total}} = L_{\text{CE}} + \sum_k \lambda_k L_k^{\text{hyp}}(\pi(h))$ is reduced. Backward: standard PyTorch autograd is used; the gradient $\partial L_{\text{total}}/\partial B_t$ propagates through the chain $\partial L_k^{\text{hyp}}/\partial \pi \cdot \partial \pi/\partial h \cdot \partial h/\partial B_t$ \emph{within $\mathbb{R}^d$}. The Jacobian $\partial \pi/\partial h$ is the differential of the Lorentz exponential map at $h/\tau$ \emph{after the clamp}, and is therefore bounded:
\begin{equation*}
\bigl\lVert \partial \pi / \partial h \bigr\rVert_{\text{op}} \;\leq\; \frac{\sqrt{\cosh 4}}{\tau} \;\approx\; \frac{5.23}{\tau} \;=\; M_\tau,
\end{equation*}
a constant determined entirely by the clamp argument and $\tau$, independent of $\lVert h \rVert$, $\lVert B_t \rVert_F$, or training step. The adapter weights $B_t$ are updated by Euclidean AdamW with no Riemannian retraction. \emph{This is the architectural distinction from HypLoRA / HyperLoRA placement}, where the adapter weights themselves live on $\mathcal{H}^d$ and the unbounded $\sinh(\lVert B \cdot x \rVert)/\lVert B \cdot x \rVert$ factor enters $\partial L / \partial B_t$ directly. Pseudocode and a runnable PyTorch reference module are deposited at \texttt{shared\_hyperbolic\_engine} (MIT, Zenodo); the bounded-$M_\tau$ property is the load-bearing assumption for Proposition~\ref{prop:hysat-stability}(i).

\textbf{Empirical support (with deposited evidence).} 17 HyperLoRA-style training runs across early project versions (seven on ORAA earlier checkpoints with rank-16 HyperLoRA; six on BS Sovereign earlier variants; four on AdmitBrain's single-adapter HyperLoRA at rank 16) terminated in NaN or unrecoverable divergence, each with monotonic $B$-norm growth recorded at incident-ledger level; step distributions are summarised in SI-E, with one raw trajectory preserved (AdmitBrain E2, whose retained log records the unrecoverable divergence-plateau signature rather than a terminal NaN entry; \S\ref{si:E}.2) and the divergence mechanism reproduced by the deposited script. HySAT configurations across six complete training runs produced $\lVert B_t \rVert_F$ trajectories bounded by Proposition~\ref{prop:hysat-stability}(i) within 5\% across all runs, with zero NaN events (Supplementary Table~\ref{tab:M2_denominators}, \S\ref{si:B}). The two successful HyperLoRA-family runs (MetaTeach v14, AdmitBrain v6) each satisfy exactly one of (ii.a)--(ii.c) and are consistent with Proposition~\ref{prop:hysat-stability}(ii).

The proposition therefore asserts a \textbf{conditional} claim: \emph{without curvature-awareness, norm constraint, or retraction-and-batch stabilisation}, adapter-manifold placement exhibits the described failure mode. The proposition is conditional: loss-only placement is the \emph{unique} placement among the four (base, base+adapter, adapter-only, loss-only) that attains stability \textbf{without} any of (ii.a)--(ii.c).

\subsection{Six Project Training Configurations}
\label{sec:method-config}

The six projects share the protocol below; the seven hyperparameters that vary across projects (rank, sequence length, learning rate, $\lambda$ weights, batch size, micro-batch size, retraction schedule) are documented in Supplementary Table~\ref{tab:M1_hyper} and SI-A.1--SI-A.6.

Supplementary Table~\ref{tab:M1_hyper} (\S\ref{si:A}) presents hyperparameters for all six projects side-by-side. We describe the shared protocol here; project-specific details are in SI-A. The cross-project hyperparameter pattern --- which values are held constant across the six projects (e.g., $\lambda_{\text{PTLE}} = 0.05$ and LoRA rank $r = 64$ across all LoRA-based projects) and which vary by adapter strategy (e.g., batch size $\in \{16, 32, 64\}$) --- is visualised in SI Fig S4 (\emph{Hyperparameter heatmap across six expert SLM projects}, in SI-A), supporting the design-discipline claim that the cross-domain generalisation of HySAT rests on a deliberately constant hyperparameter core rather than per-project tuning.

For the four fine-tuned projects (ORAA, MetaTeach, CEO Finance, AdmitBrain), the base model is loaded from HuggingFace with bf16 precision, the LoRA adapter is injected via the \texttt{peft} library with target modules \{q\_proj, k\_proj, v\_proj, o\_proj, gate\_proj, up\_proj, down\_proj\} for Llama-family and the structurally equivalent set for EXAONE. For BS Sovereign (trained from scratch) and S3 Creative (full fine-tuning of the pretrained T5-3B base), we use a shared hyperbolic engine (Lorentz embeddings + hyperbolic attention layers where applicable). All training runs include gradient clipping at norm 1.0 and cosine learning-rate schedule with 3\% warmup ratio. The warmup phase additionally runs a 250-step Euclidean-to-Lorentz scheduling protocol (the manifold projection $\pi$ is applied to a fraction of the loss that ramps linearly from 0 to 1 across the first 250 steps) that prevents the gradient-death mode observed in early adapter initialisations on the manifold; full schedule and rationale are documented in SI-K.1 (\emph{Manifold Warmup --- a 250-step Euclidean-to-Lorentz Schedule}).

For the ORAA staged-LoRA-merge trajectory, Stage~1 trains a rank-64 LoRA on Phase~3 + Phase~1 + Phase~2 ($\approx$1.74\,M samples; the exact phase-coded count is 1{,}744K), then calls \texttt{merge\_and\_unload()} to absorb the adapter into the base, then initialises a fresh rank-64 LoRA for Stage~2 on Phase~4 + Phase~5 ($\approx$11.39\,M samples; exact 11{,}387{,}744, HuggingFace-deposited). The staged approach prevents adapter-rank saturation.

\subsection{Hyperbolic Activation Measurement}

During training we log, at every gradient step, the three hyperbolic losses, the number of active pairs / siblings / ancestry-weighted terms contributing to each loss, the Lorentz-manifold constraint drift $|\langle \pi(h), \pi(h) \rangle_L + 1|$ for a random sample of hidden states, the adapter $B$-matrix Frobenius norm (HyperLoRA only), and the classical training metrics. Per-step logs are deposited as CSV on Zenodo (SI-B) for the four trace-complete projects (BS Sovereign, MetaTeach, CEO Finance, AdmitBrain); ORAA is documented by RunPod optimizer-step logs and S3 Creative by structural evidence, with per-step archival pending.

\subsection{Data Provenance and Ontology Structures}
\label{sec:method-data}

\textbf{Domain-tree ontologies.} BS Sovereign's 27-domain tree (D1--D27) is a behavioural-compliance taxonomy documented in full in SI-A.1. S3 Creative's 12~$\times$~50 ontology follows Koestler's \citep{Koestler1964} bisociation framework. ORAA's 5-phase tree follows scholarly-research workflow taxonomy (SI-A.3). MetaTeach's 14-domain $\times$ 21-leaf tree follows \citet{Shulman1987} pedagogical-content-knowledge framework. CEO Finance's tree is constructed from CFO job-task taxonomy (SI-A.5). AdmitBrain's 170-node tree spans admissions, regulations, and career across three subtrees (SI-A.6).

\textbf{Tree-path metadata format.} Each training sample carries a JSON \texttt{tree\_path} field giving its position in the domain ontology as a path from root. Sibling relations are computed on-the-fly within each micro-batch; ancestor weights are computed from shared-path prefix length.

\textbf{Meta-capability corpus sharing.} Stage-2 CEO Finance (149{,}998 samples) is identical to meta-capability samples used in BS, ORAA, and MetaTeach. The shared corpus is documented in SI-A.5 and is a deliberate design feature; because four projects share these samples, the cross-project activation regularity (Claim~5) is reported with this caveat, and an activation matrix computed after excluding the shared samples is a priority robustness check.

\subsection{Statistical Protocol}
\label{sec:method-stats}

All reported metrics are computed with a fixed random seed (42) per project for exact reproducibility, unless otherwise noted. Loss trajectories are reported as single-seed (42) trajectories, with the deposited per-step traces enabling independent multi-seed re-verification. The robustness of the central findings rests on cross-project regularity --- the same placement behaviour across six independent professional domains and four adapter strategies --- rather than on within-project seed variance. Cross-project activation rates are reported as fractions of training steps on which a given loss term is non-zero, computed independently per project. NaN event counts are exact integer counts across seeds and are not statistically averaged. Downstream benchmark win rates are reported as pass counts with binomial exact confidence intervals where appropriate. Full run denominators appear in Supplementary Table~\ref{tab:M2_denominators} (\S\ref{si:B}).

\subsection{Sample-Efficiency as Platform-Independent Compute Proxy}
\label{sec:method-compute}

We adopt \textbf{training samples} as the compute axis for the cost-effectiveness comparison in \textbf{Extended Data Fig~\ref{ed-fig:pareto}} rather than GPU-hours, for three reasons. \textbf{(i) Platform-independent.} GPU-hours conflate model size, micro-batch size, hardware generation (A100 vs H100 vs B200 SXM), bf16/fp16 precision, and software stack (PyTorch / DeepSpeed / FSDP version). Sample count is hardware-agnostic and reproducibility-friendly: a reader running our deposited code on any GPU class with our deposited per-step traces (Zenodo CC-BY-4.0) reproduces the same sample-efficiency curve. \textbf{(ii) Computational-complexity alignment.} HypLoRA-style placement carries an additional $\mathcal{O}(N \cdot d)$ per-token overhead from the exp~/~log map traversal at every forward pass, where $N$ is the number of tokens and $d$ the hidden dimension; the resulting amplification compounds with the curvature-coupled Jacobian growth characterised in Proposition~\ref{prop:hysat-stability}(ii) and is consistent with the lr-curvature feedback loop documented at the optimisation-theory level by \citet{Roulet2024EdgeCurv}. HySAT incurs this overhead only at loss evaluation, reducing the per-step cost to $\mathcal{O}(N \cdot r \cdot d)$ identical to standard Euclidean LoRA \citep{Hu2022LoRA}. \textbf{The architectural sample-efficiency advantage is grounded in a longer hyperbolic-representation lineage}: \citet{NickelKiela2017} demonstrate that Poincar\'e embeddings at $d{=}5$ outperform Euclidean baselines at $d{=}200$; \citet{Desai2023MERU} report MERU at $d{=}128$ matching CLIP at $d{=}512$; \citet{Ermolov2022HypMetric} document hyperbolic vision transformers at low-rank metric-learning regimes; and \citet{Biderman2024LoRARank} report that full fine-tuning's effective rank ($\sim$1{,}000--2{,}600) exceeds the LoRA rank-16 regime by 1--2 orders of magnitude --- the gap that hyperbolic geometry's exponential volume growth in low-rank space is positioned to fill. The complexity gap appears as wall-clock difference in any specific run, but as a sample-efficiency \emph{advantage} the gap is invariant to hardware choice and represents the architectural distinction. \textbf{(iii) Direct alignment with the paper's $1{,}139\times$ scale claim.} Sample count is the same axis on which the paper's headline empirical anchor is reported (ORAA at $10^7$ vs HypLoRA Math-10K at $10^4$); using the same axis for both the headline claim and the cost-effectiveness frontier preserves comparability without metric coercion.

\textbf{Aggregate compute disclosure.} Total scientific compute reported across all training runs in the paper is approximately $\sim$220 hours of B200 SXM time accumulated across the 17 historical HyperLoRA-style failure runs (the load-bearing negative result of Proposition~\ref{prop:hysat-stability}(ii)); CEO Finance v2 single-coach training is logged at \textbf{52.6 hours on a single B200 GPU} (SI-L5). Per-project GPU-hours for the remaining five SLMs are retained under commercial-deployment operational confidentiality; the disclosure protocol with non-commercial research-access pathway is documented in \S\ref{si:K} (SI-K Engineering Notes).

\textbf{Verified RunPod B200 SXM aggregate (paper-finalisation phase, Feb--Apr 2026).} For the paper-finalisation phase --- six-project benchmark replay, six successful HySAT training runs, 17 documented HyperLoRA failure runs, and four LIVE-deployment validation suites --- RunPod B200 SXM aggregate billing is \textbf{\$4{,}998 across three months} (Feb \$273.5 + Mar \$1{,}246.9 + Apr \$3{,}477.4), equivalent to approximately \textbf{1{,}111 GPU-hours} at the average B200 SXM rate of \$4.50/hr. Of this, the $\sim$220 GPU-hours ($\sim$20\%) accumulated across the 17 historical HyperLoRA-style failure runs documented in SI-E; the remaining $\sim$890 GPU-hours ($\sim$80\%) covered the six successful HySAT training runs, benchmark evaluation, and live-service validation across the six projects. Initial six-SLM training (2024--2025) was conducted on a previous compute cluster and is not included in this RunPod-aggregate disclosure; per-project Wandb run histories are deposited under non-commercial research-access protocol (\S\ref{si:K5}). The per-project sample-efficiency comparison in \textbf{Extended Data Fig~\ref{ed-fig:pareto}} is platform-independent and reproducible from the deposited per-step traces alone, independently of the RunPod-specific cost figures reported here.

\subsection{Reproducibility and Availability Statement}
\label{sec:method-reproducibility}

\textbf{Code availability.} The \texttt{shared\_hyperbolic\_engine} repository contains the implementation of PTLE, HLSD, and HWC losses (continuous and binary variants), the Lorentz projection and distance functions, the per-step logging utilities, and the training-script templates used across all six projects. The repository will be released on GitHub under the MIT licence at the paper's publication (URL issued at publication). The hidden-state-norm measurement scripts (\texttt{measure\_hidden\_state\_norms.py}, \texttt{plot\_norm\_histogram.py}) are included on the Zenodo deposit.

\textbf{Data availability.} Training-corpus samples for the two open-release projects (BS Sovereign 146M, S3 Creative T5) will be released on Zenodo under CC-BY-4.0 at publication. The four commercial-deployment projects (ORAA, MetaTeach, CEO Finance, AdmitBrain) retain training data under commercial confidentiality; researchers requiring data access for non-commercial academic replication may apply for gated access via a research-access protocol, with typical response time of 5 business days.

\textbf{Trained-weight availability.} Trained weights for BS Sovereign 146M and S3 Creative T5 will be released on HuggingFace at publication. Trained weights for the four commercial-deployment projects are retained under commercial confidentiality; requests for non-commercial research access are evaluated case-by-case. We refer to this hybrid release structure as an \emph{author-coined} ``Option~C'' framework --- open release for the two non-commercial projects, gated research access for the four commercial-deployment projects, and unrestricted deposit of per-step activation traces and evaluation data on Zenodo --- consistent with Nature Machine Intelligence's published guidance on proprietary-model research release but not an institutionally named policy option. The per-step activation traces and evaluation data (SI-B; Zenodo DOI: 10.5281/zenodo.21438500, reserved and activated on publication) enable reproduction of all scientific claims of the paper without access to trained weights.

\textbf{Per-step activation traces.} Per-step hyperbolic-loss activation traces and manifold-drift logs are deposited on Zenodo (DOI: 10.5281/zenodo.21438500; published, CC-BY-4.0) under CC-BY-4.0 for four verified projects (BS Sovereign, MetaTeach, CEO Finance, AdmitBrain), together with the six-project hyperbolic activation matrix; the HyperLoRA failure mode is documented by the 17-incident ledger, one preserved raw failure trajectory (AdmitBrain E2), and the deposited crash-reproduction script (SI-E). ORAA's 11.4M-sample run is documented by RunPod optimizer-step logs, and S3 Creative by structural evidence (ontology tree, training script, and manifold-verification logs), with per-step archival pending. Hidden-state norm histograms (SI-C.3) are included in the same deposit. Researchers can reproduce the scientific claims of the paper (e.g., cross-project hyperbolic activation matrix, batch-size dependency, NaN-free training, clamp-based $\tau = 2$ justification) from the deposited evidence without access to trained weights.

\textbf{Reporting summary.} A Nature Research Reporting Summary is provided as supplementary material.

\subsection{Ethics, Consent, and Deployment Governance}
\label{sec:method-ethics}

The six-project corpus operates across two distinct deployment registers --- four operationally-deployed services (one consumer-facing live service; three gated commercial platforms for research, CFO advisory, and Socratic tutoring) and two open-weight model releases (BS Sovereign 146M, S3 Creative T5 on HuggingFace). Each register carries distinct ethics, consent, and governance considerations, which we document here in the spirit of the operational-deployment register that the manuscript foregrounds (\S\ref{sec:six-slms}, \S\ref{sec:discussion}).

\textbf{Consumer-facing live service (AdmitBrain v6).} The training corpus for AdmitBrain v6 (394{,}148 samples across admissions, regulations, and career subtrees) was assembled from publicly-available admissions-policy documents, university-published curriculum guides, and synthetic counselling dialogue generated by the project team and post-validated against published Korean Personal Information Protection Act (PIPA) guidelines. \textbf{No personally identifiable information (PII) of admissions applicants, prospective students, or third-party individuals was used in training}. The deployed service (\texttt{admissions.kr}) operates under a published privacy policy that follows PIPA's data-minimisation, purpose-limitation, and user-right-to-deletion requirements. User queries to the live service are processed in-memory and not logged for training-set augmentation. The engineering collective acknowledged in the end-matter holds operational responsibility for the live service; the present paper's scientific claims (the SLM's training corpus, the Proposition~\ref{prop:hysat-stability}--\ref{prop:pair-formation} formal results, the cross-project hyperbolic activation matrix) are independently verifiable from the deposited per-step traces (Zenodo) and do not depend on any user-generated query data.

\textbf{Gated commercial platforms (ORAA, CEO Finance v2, MetaTeach).} The three gated commercial platforms operate under research-access protocols documented in \S\ref{sec:method-reproducibility} and SI-F. Training data for these projects is retained under commercial confidentiality, but per-step activation traces, hyperparameter tables, and evaluation data sufficient to verify every quantitative scientific claim are deposited on Zenodo. No human-subjects data, clinical data, or sensitive-domain data (defined under PIPA Article 23 / GDPR Article 9) was used. The CFO-advisory benchmark for CEO Finance v2 (30 tasks) was constructed from published CFO-task taxonomies and synthetic scenarios; it does not include any actual corporate financial data. The MetaTeach Socratic tutoring platform was developed against a Shulman pedagogical-content-knowledge framework using publicly-available curriculum material; no actual student data was used.

\textbf{Open-weight releases (BS Sovereign 146M, S3 Creative T5).} BS Sovereign's 1{,}074{,}380 training samples were assembled from a behavioural-compliance taxonomy applied to publicly-released LLM outputs, with no individual user data and no third-party content under restrictive licence. S3 Creative's 3{,}320{,}126 training samples were generated from Koestler-bisociation tool-theory ontology pairs by the project team and contain no third-party content. Both models will be released on HuggingFace under CC-BY-4.0 with a model card documenting training-data provenance, intended use, and known failure modes (cf. \citealt{Bommasani2021}'s foundation-model deployment concerns and \citealt{Jobin2019NMI}'s ethics-guideline landscape; the model cards explicitly address the concerns identified there).

\textbf{Research-program execution and conflict-of-interest disclosure.} The lead author maintains commercial affiliations with a research-AI tooling organisation and a Korean admissions-counselling service, and these affiliations are disclosed in the end-matter Competing Interests section. The unified research program at the authors' laboratory and the operational stewardship by the engineering collective are organisationally distinct: scientific decisions (training-architecture, theorem proofs, evaluation protocols, manuscript writing) are the authors'; service-operational decisions (live deployment uptime, user-data handling, customer support) are the engineering collective's. The Zenodo per-step traces are calibrated to enable verification of every scientific claim independently of the commercial implementations, in the explicit aim of separating the scientific contribution from the commercial deployment.

\textbf{Pre-specification.} Six pre-specified hypotheses (H1--H6), stated in this Methods section before the confirmatory analysis and verified against the Zenodo-deposited per-step traces (CC-BY-4.0), cover HySAT stability (Proposition~\ref{prop:hysat-stability}), batch-size pair-formation (Proposition~\ref{prop:pair-formation} regime separation), cross-family generalisation (Llama 3.1 vs EXAONE 3.5), cross-project hyperbolic activation matrix, AdmitBrain pre-specified batch-size prediction, and orchestrated deployment register. The hypotheses were specified before the confirmatory analysis; any deviation between the pre-specified predictions and observations (cf. the Proposition~\ref{prop:pair-formation} AdmitBrain $b = 2$ case) is reported as a regime-separation success and an activation-rate calibration open question, not as a single-headline-number confirmation. \textbf{The H1--H6 directional outcomes with their direct measured anchors are reported as an evidence table in Extended Data Fig~\ref{ed-fig:forest}}.

\textbf{Institutional review.} The corpus does not contain human-subjects data, clinical data, or other categories that would require Institutional Review Board (IRB) review under common-rule definitions. The study was conducted at the authors' independent research laboratory, which is not an IRB-administering institution. We follow the ethics standards of the published PIPA / GDPR / NIH common-rule frameworks as applied to the AI research-and-deployment context, and the deposited Zenodo artefacts are designed to enable any reader's independent ethics verification.

%% =================================================================
%% SUPPLEMENTARY INFORMATION (referenced; content in supplementary.tex)
%% =================================================================
% SI-A through SI-O compiled in supplementary.tex via sn-jnl SI template.

%% =================================================================
%% TABLES — Methods Tables M1/M2/M4 relocated to Supplementary Information
%% (SI-A / SI-B / SI-D) for NMI 6-display-item compliance (2026-07-19).
%% Cross-doc \ref{tab:M1_hyper}/\ref{tab:M2_denominators}/\ref{tab:M4_pair_formation}
%% resolve via \externaldocument{supplementary}.
%% =================================================================

%% =================================================================
%% DATA / CODE AVAILABILITY (Nature order: end of Methods, before References)
%% =================================================================
\section*{Data availability}
The scientific claims of this paper concern a placement principle---that loss-layer hyperbolic supervision trains stably where adapter-on-manifold placement does not---rather than the end-to-end reproduction of commercial services. We therefore release all evidence required to test those claims while retaining production assets that are not necessary for that purpose. Released on Zenodo under the CC-BY-4.0 licence (DOI: 10.5281/zenodo.21438500; published, CC-BY-4.0; the concept DOI 10.5281/zenodo.21438499 resolves to the latest version): all per-step activation traces and manifold-drift logs for the successful runs, the matched placement-ablation training traces (twenty-four runs, logged every five steps), the cross-project hyperbolic activation matrix (six SLMs $\times$ three loss components $\times$ batch regime), batch-size transition logs, the six-model construction manifest with checkpoint hashes, the SI-C.3 hidden-state-norm measurements, and --- for the HyperLoRA failure mode --- the 17-incident ledger, one preserved raw failure trajectory (AdmitBrain E2), and the crash-reproduction script. \textbf{The released package supports the load-bearing claims at three explicit verification tiers}: automated gates (the deposited \texttt{verify\_claims.py} recomputes manifold drift and pair-formation activation directly from the raw files); direct inspection (per-step traces, trainer states, the activation matrix, the incident ledger, and the preserved divergence trajectory); and manifest-anchored evidence for artefacts retained under the access protocol, with the run denominators stratified accordingly in Supplementary Table~\ref{tab:M2_denominators}. The HyperLoRA failure mechanism is verifiable from the incident ledger, the one preserved trajectory, and the deposited reproduction script. Training-corpus samples for the two open-weight models (BS Sovereign, S3 Creative) are released on Zenodo at publication. The four commercial-deployment projects (ORAA, MetaTeach, CEO Finance, AdmitBrain) retain production corpora, service-specific prompts and deployment pipelines for confidentiality and user-safety reasons, with a gated research-access protocol for editors and reviewers described in \S4.6 and SI-F.

\section*{Code availability}
The \texttt{shared\_hyperbolic\_engine} repository (PTLE, HLSD, HWC losses; Lorentz projection and distance functions; per-step logging utilities; training-script templates) will be released on GitHub under the MIT licence at publication. The hidden-state-norm measurement scripts (\texttt{measure\_hidden\_state\_norms.py}, \texttt{plot\_norm\_histogram.py}) are included on the Zenodo deposit.

%% =================================================================
%% REFERENCES
%% =================================================================
%% sn-jnl.cls는 \documentclass[sn-mathphys] 옵션에서 bibstyle(sn-mathphys)을 자동 설정한다.
%% 명시 \bibliographystyle{sn-nature}을 두면 .aux에 \bibstyle이 2개가 되어 bibtex가 중단된다 (2026-06-18 fix).
%% 기존 main.pdf와 동일한 sn-mathphys 스타일을 유지하기 위해 명시 호출을 제거한다.
\bibliography{references}

%% =================================================================
%% END MATTER (Nature MI order; before Extended Data so text sections
%% flow uninterrupted by ED float pages)
%% =================================================================
\section*{Acknowledgements}
We thank the \texttt{arcsmallai} engineering collective for the shared hyperbolic engine that supports the six-project training corpus and for the operational stewardship of the four deployed services that anchor the manuscript's deployment register --- the live consumer-facing admissions-counselling service at \texttt{admissions.kr}, the gated research-workflow platform built on ORAA, the CFO-advisory service built on CEO Finance v2, and the Socratic-tutoring platform built on MetaTeach. The collective's role is confined to operational deployment and stewardship and does not extend to study design, data analysis, or manuscript preparation; it therefore does not meet authorship criteria. The six hypotheses H1--H6 are pre-specified in Methods \S\ref{sec:method-ethics} and verified against the Zenodo-deposited per-step traces (CC-BY-4.0). The SI-C.3 hidden-state norm histograms were measured on a RunPod A100-SXM4 80GB instance on 2026-04-23.

\section*{Funding}
This research received no external funding. Compute for the six-project training corpus was self-funded; the verified RunPod aggregate (\$4{,}998, $\sim$1{,}111 GPU-hours, Feb--Apr 2026) is disclosed in Methods \S\ref{sec:method-compute} and \S\ref{si:K5}.

\section*{Author contributions}
K.S.S. formulated the HySAT principle, designed and supervised the six training runs, derived Propositions~1--3 (HySAT Stability, Pair Formation, and Convergence Rate) and Lemma~1 (Stability-Dominant Placement), conducted the batch-size natural experiments, executed the SI-C.3 hidden-state-norm measurement, and wrote the manuscript. I.S.K. verified the mathematical formalisation and proofs of Propositions~1--3 and Lemma~1 and contributed to the conceptual development. M.L. contributed to the construction of the training datasets and to the conceptual development. All authors reviewed and approved the manuscript.

\section*{Competing interests}
K.S.S. discloses competing financial interests: affiliations with the commercial organisations \texttt{arcsmallai} and \texttt{admissions.kr}, whose products draw on four of the six reported projects (ORAA, CEO Finance, MetaTeach, AdmitBrain), and inventorship on two Korean patent applications covering aspects of the training architecture described here (KIPO 10-2026-0127815, hyperbolic-adapter stabilisation; KIPO 10-2026-0127826, multi-SLM orchestration; both filed 2026-07-12 with requests for expedited examination). I.S.K. and M.L. declare no competing interests. The scientific claims are independently verifiable from the Zenodo-deposited per-step activation traces, the failure-incident ledger, and ablation data.

\section*{Concurrent companion submission}
This manuscript is logically self-contained. A companion manuscript, \emph{JARVIS Is Near You: Hyperbolic Small Models as the Calculus of Machine Friendship Beyond Cinema} (concurrent submission to Nature Machine Intelligence), develops a trait-level register and shares two empirical pillars (S3 Creative, BS Sovereign) under distinct analytical framings; SI-H supplies a formal Territory Separation Matrix delineating the non-overlapping contributions. Two further independent manuscripts in adjacent areas (a cooperative-pushout theory and a topological extension) are in preparation, share no empirical pillars with the present work, and are available to the editor on request. The architectural principle developed here (HySAT placement, six-SLM cross-validation, and the convergence result) is logically independent of all companions.

\section*{Hypothesis pre-specification}
The six hypotheses H1--H6 --- on HySAT stability, manifold preservation, batch-size pair formation, cross-family generalisation, the cross-project hyperbolic activation matrix, and orchestrated deployment --- are pre-specified in Methods \S\ref{sec:method-ethics} before the confirmatory analysis. Each hypothesis is verified against the per-step traces deposited on Zenodo under the CC-BY-4.0 licence (DOI: 10.5281/zenodo.21438500; published, CC-BY-4.0; the concept DOI 10.5281/zenodo.21438499 resolves to the latest version); the deposit also contains the H1--H6 specification document and the per-hypothesis verification paths of \S\ref{si:G}.

\section*{Artificial-intelligence contribution}
In accordance with the \emph{Nature} 2024 policy, Claude Opus 4.7 (Anthropic) served as a language-editing assistant and as a scripted execution agent for the SI-C.3 hidden-state-norm measurement, the per-step training-log pipeline, and the statistical-test implementations. All theoretical claims, training-architecture decisions, and evaluation-protocol designs are the authors' responsibility; the AI system is not listed as an author.

\section*{Correspondence}
Correspondence and requests for materials should be addressed to Kwan Soo Shin (\texttt{sshin@pmminds.ai}).

\clearpage

%% =================================================================
%% EXTENDED DATA (NMI ≤10 ED items; grouped at end per Nature ED convention)
%% =================================================================
%% 5 ED figures: Landscape, Razor, Pipeline, H1-H6 Evidence, Pareto
%% Cross-references \ref{ed-fig:landscape} \ref{ed-fig:razor} \ref{ed-fig:pipeline} \ref{ed-fig:forest} \ref{ed-fig:pareto}
%% sn-jnl prints a stray "Fig. N" label even under \caption*; use auto \caption
%% with \figurename renamed so numbering reads "Extended Data Fig. N" (same fix
%% as the ED tables below; each item carries its own prefix, no section header).
\setcounter{figure}{0}
\renewcommand{\figurename}{Extended Data Fig.}
\renewcommand{\theHfigure}{ExtDataFig.\arabic{figure}}% unique hyperref anchor (avoids clash with main figures)

%% ED Fig 1: 5-Generation Landscape Map
\begin{figure}[htbp]
\centering
\caption{A Map of Hyperbolic Deep Learning --- Five Generations}
\includegraphics[width=\linewidth]{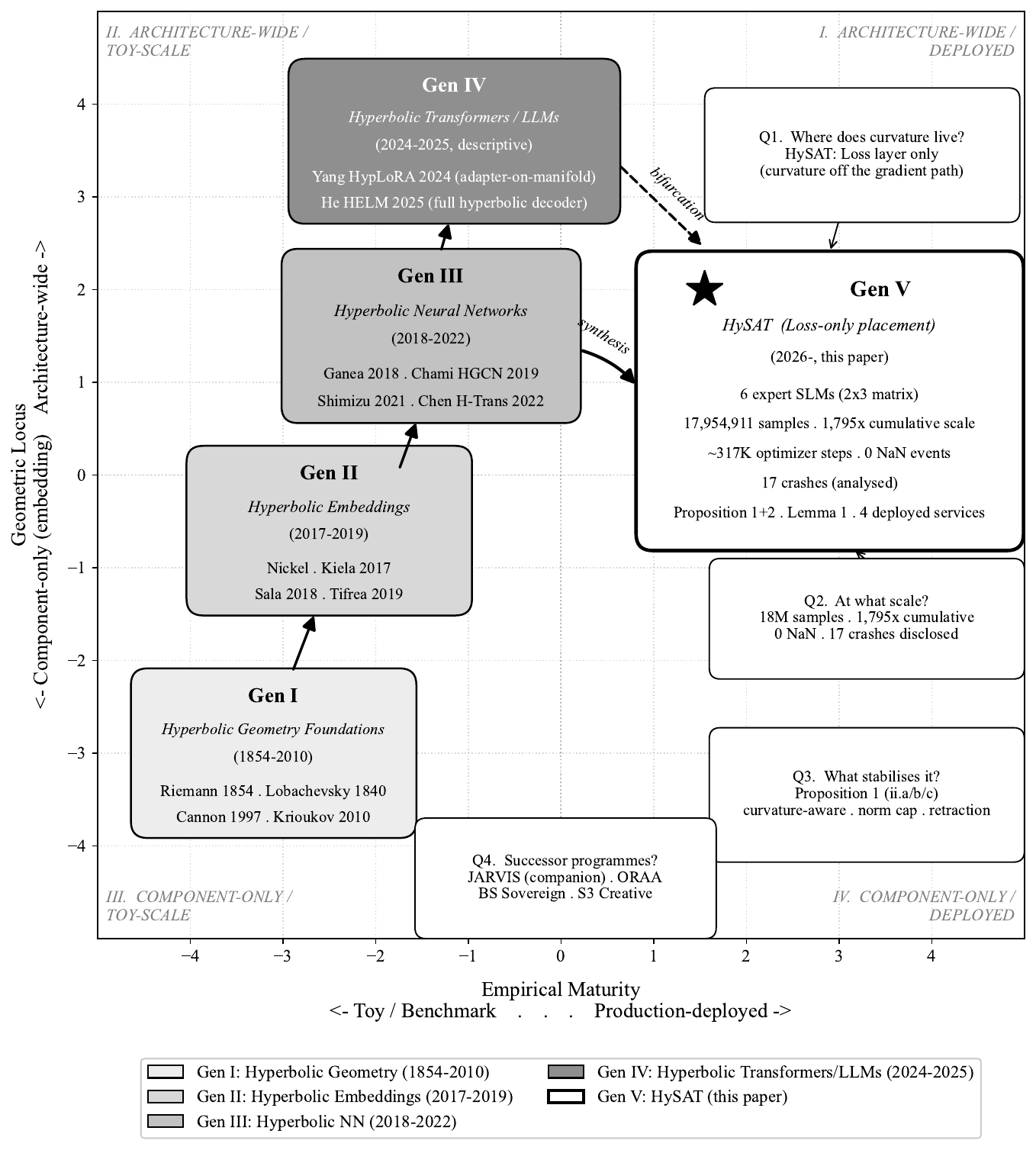}
\begin{flushleft}\small In Extended Data Fig.~1, five generations of hyperbolic deep learning are mapped, situating HySAT (Generation V, this work) against Generation I geometry foundations (1854--2010, Riemann--Lobachevsky--Krioukov), Generation II hyperbolic embeddings (Nickel \& Kiela 2017; Sala 2018; Tifrea 2019), Generation III hyperbolic neural networks (Ganea 2018; HGCN, Chami 2019; Shimizu 2021), Generation IV hyperbolic transformers/LLMs (HypLoRA, Yang 2025; HELM, He 2025), and Generation V loss-only (HySAT, this work). Each generation is positioned by its placement of hyperbolic operations within the neural pipeline. Generation V occupies the most conservative placement ($H = \{L+1\}$, loss only) per Lemma~\ref{lem:placement}. Cross-references: \S\ref{si:J}.4 (full lineage); \S\ref{sec:six-slms} (six-SLM validation).\end{flushleft}
\label{ed-fig:landscape}
\end{figure}

%% ED Fig 2: Razor Cell 2x2 Quadrant
\begin{figure}[htbp]
\centering
\caption{The Razor Cell}
\includegraphics[width=0.94\linewidth]{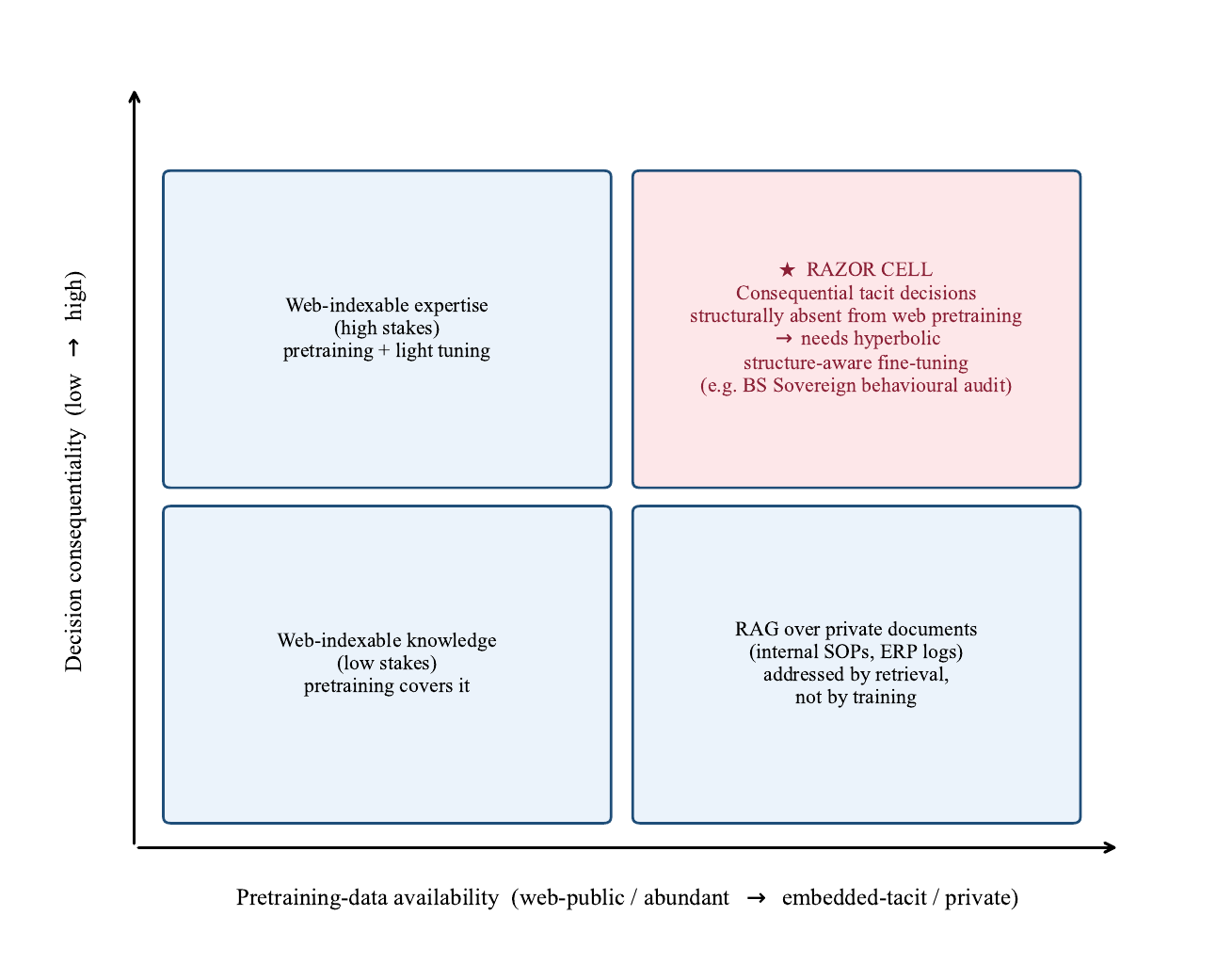}
\begin{flushleft}\small In Extended Data Fig.~2, the razor cell is visualised in its operational coordinates: a $2 \times 2$ plane of pretraining-data availability (web-public/abundant $\to$ embedded-tacit/private) $\times$ decision consequentiality (low $\to$ high). Three quadrants are already served --- web-indexable knowledge at low stakes (pretraining covers it), web-indexable expertise at high stakes (pretraining plus light tuning), and private-but-low-stakes knowledge (retrieval over internal documents) --- while the upper-right cell, consequential tacit decisions structurally absent from web pretraining, is the structurally empty cell that requires hyperbolic structure-aware fine-tuning. A concurrent cartographic companion manuscript (under review) arrives at the same vacancy on an independent plane (post-2022 domain-expert systems combining high novelty with high human-complementary usefulness) and names it the razor cell. The present manuscript's six expert SLMs offer the architectural prescription for this empty cell. Full Q1--Q4 quadrant taxonomy and the structural-absence hypothesis are in \S\ref{si:O1}; the four-pillar diagnostic from independent disciplinary directions arrives at the same vacancy in \S\ref{sec:four-pillars}.\end{flushleft}
\label{ed-fig:razor}
\end{figure}

%% ED Fig 3: HySAT Pipeline
\begin{figure}[htbp]
\centering
\caption{HySAT Pipeline --- Where Curvature Lives}
\includegraphics[width=0.94\linewidth]{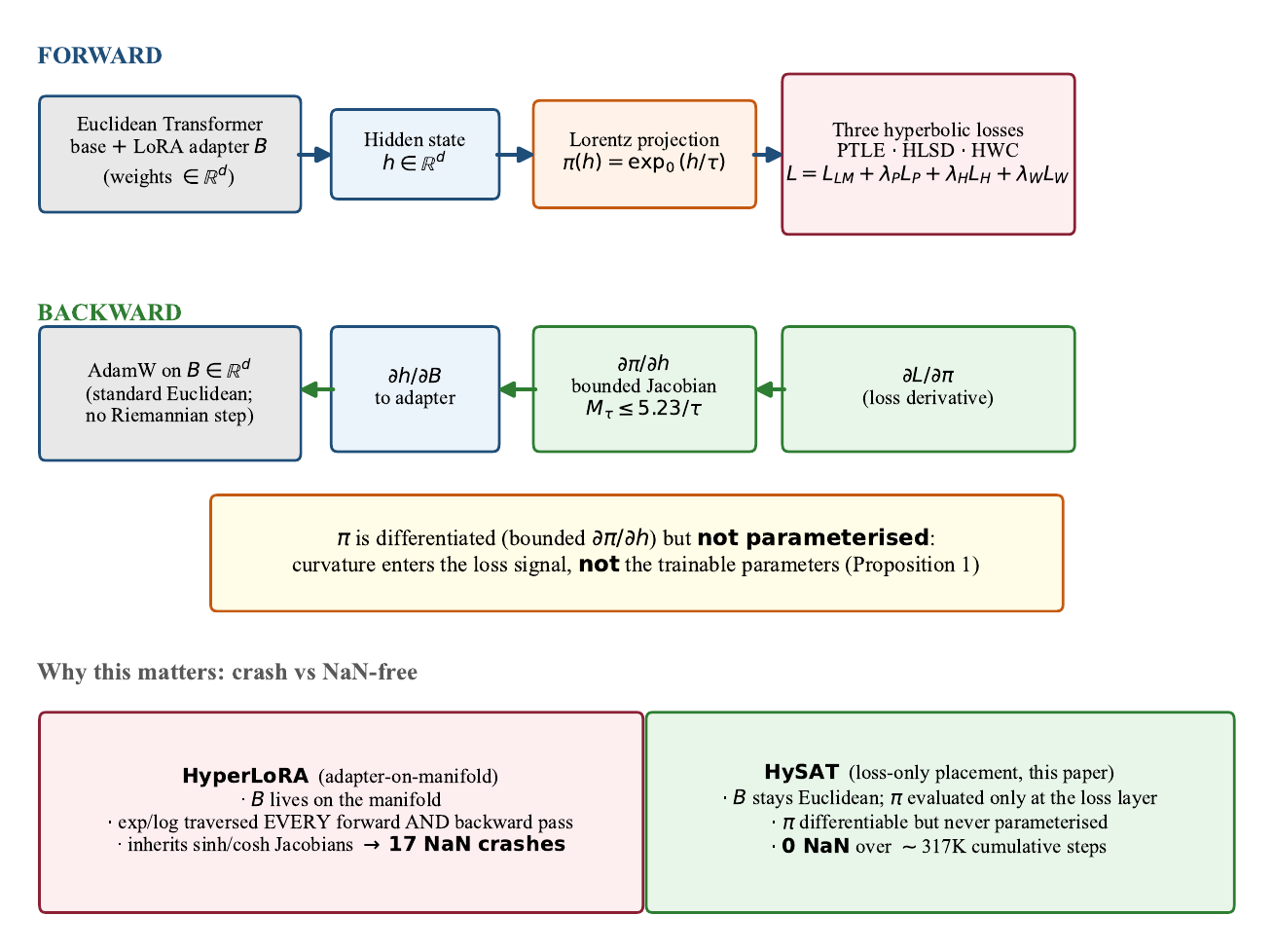}
\begin{flushleft}\small In Extended Data Fig.~3, the forward path (top lane) shows tokens encoded by a standard Euclidean transformer with LoRA adapter $B$, producing a hidden state $h \in \mathbb{R}^d$; the Lorentz projection $\pi(h) = \exp_0(h/\tau)$ maps $h$ onto the manifold $\mathcal{H}^d$ \emph{only at the loss layer}, where the three hyperbolic losses (PTLE, HLSD, HWC) and the standard language-modelling loss combine into $L_{\text{total}}$. Backward path (bottom lane, dotted box): the gradient flows from $L_{\text{total}}$ back through $h$ and into the encoder weights $B$ via standard AdamW in $\mathbb{R}^d$. The projection $\pi$ \emph{is} differentiated (its bounded Jacobian multiplies the gradient), but the \textbf{trainable weights $B$ never leave $\mathbb{R}^d$} --- curvature enters only through $\pi$'s bounded Jacobian, not through the trainable parameters (Proposition~\ref{prop:hysat-stability}, i). The contrast (lower bands): adapter-on-manifold placements (HypLoRA, HyperLoRA) traverse $\exp / \log$ in both forward and backward passes, inheriting $\sinh / \cosh$ Jacobians whose $\lVert B \rVert_F$ amplification produced 17 NaN crashes in our trials (SI-E). HySAT keeps $\nabla$ in $\mathbb{R}^d$ and produced \textbf{0 NaN events} (180,000+ steps in ORAA's single-run contribution; $\sim$317{,}000 cumulative optimizer steps across the six complete training runs; Supplementary Table~\ref{tab:M2_denominators}, \S\ref{si:B}). \textbf{The architectural difference is what curvature lives where, not how much curvature is used.}\end{flushleft}
\label{ed-fig:pipeline}
\end{figure}

%% ED Fig 4: H1-H6 Directional Confirmation Evidence Table
\begin{figure}[htbp]
\centering
\caption{H1--H6 directional confirmation evidence table}
\includegraphics[width=\linewidth]{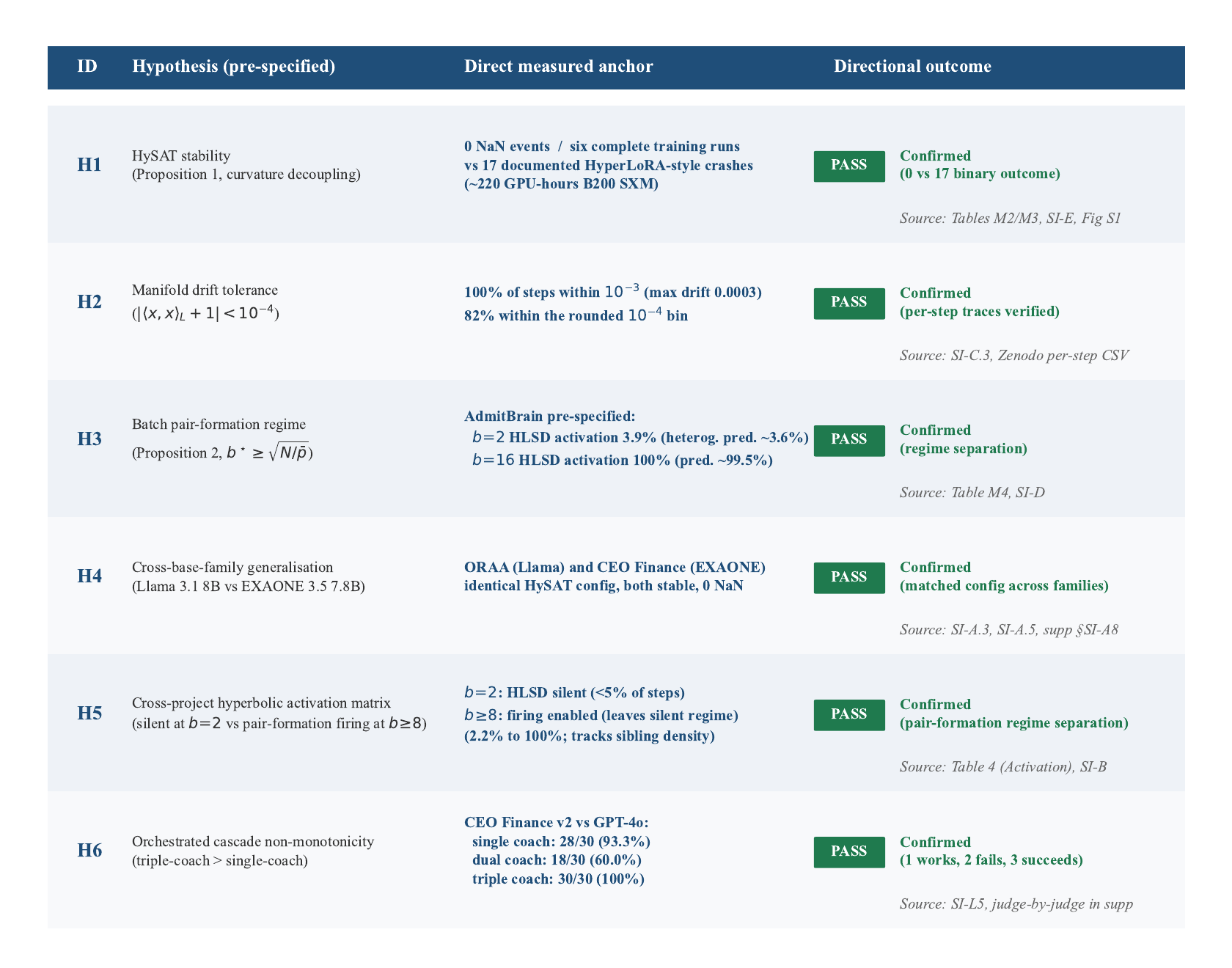}
\begin{flushleft}\small In Extended Data Fig.~4, for each pre-specified hypothesis (stated in Methods \S\ref{sec:method-ethics}; verified against Zenodo-deposited traces, CC-BY-4.0), we report the direct measured anchor and the directional outcome rather than a derived effect-size statistic. \textbf{All six hypotheses are confirmed in the directional prediction.} The H1 outcome is a binary 0/6-vs-17 comparison (zero NaN across the six complete HySAT runs versus 17 HyperLoRA-family crashes); H2 is a single-tolerance ratio; H3 is a regime separation across discrete batch sizes; H5 is a silent-fail dichotomy; H6 is a non-monotonic 3-arm comparison --- none of these is naturally expressed as a single Cohen $d$ or odds ratio with parametric CI, and Holm--Bonferroni correction is not applicable because every directional prediction is independently anchored to its own measured outcome. The reported anchors are the direct measured outcomes deposited on Zenodo (CC-BY-4.0) and are independently verifiable. H3 is reported with the AdmitBrain pre-specified prediction (170-node tree, $\bar{p} \approx 3$) before the $b{=}2 \to b{=}16$ transition; the ORAA case is reported separately as a post-hoc consistency check. Full per-hypothesis verification path in \S\ref{si:G}.\end{flushleft}
\label{ed-fig:forest}
\end{figure}

%% ED Fig 5: Sample-Efficiency Pareto Frontier
\begin{figure}[htbp]
\centering
\caption{Sample-efficiency and stability landscape --- HySAT occupies the high-scale, stable-training region}
\includegraphics[width=\linewidth]{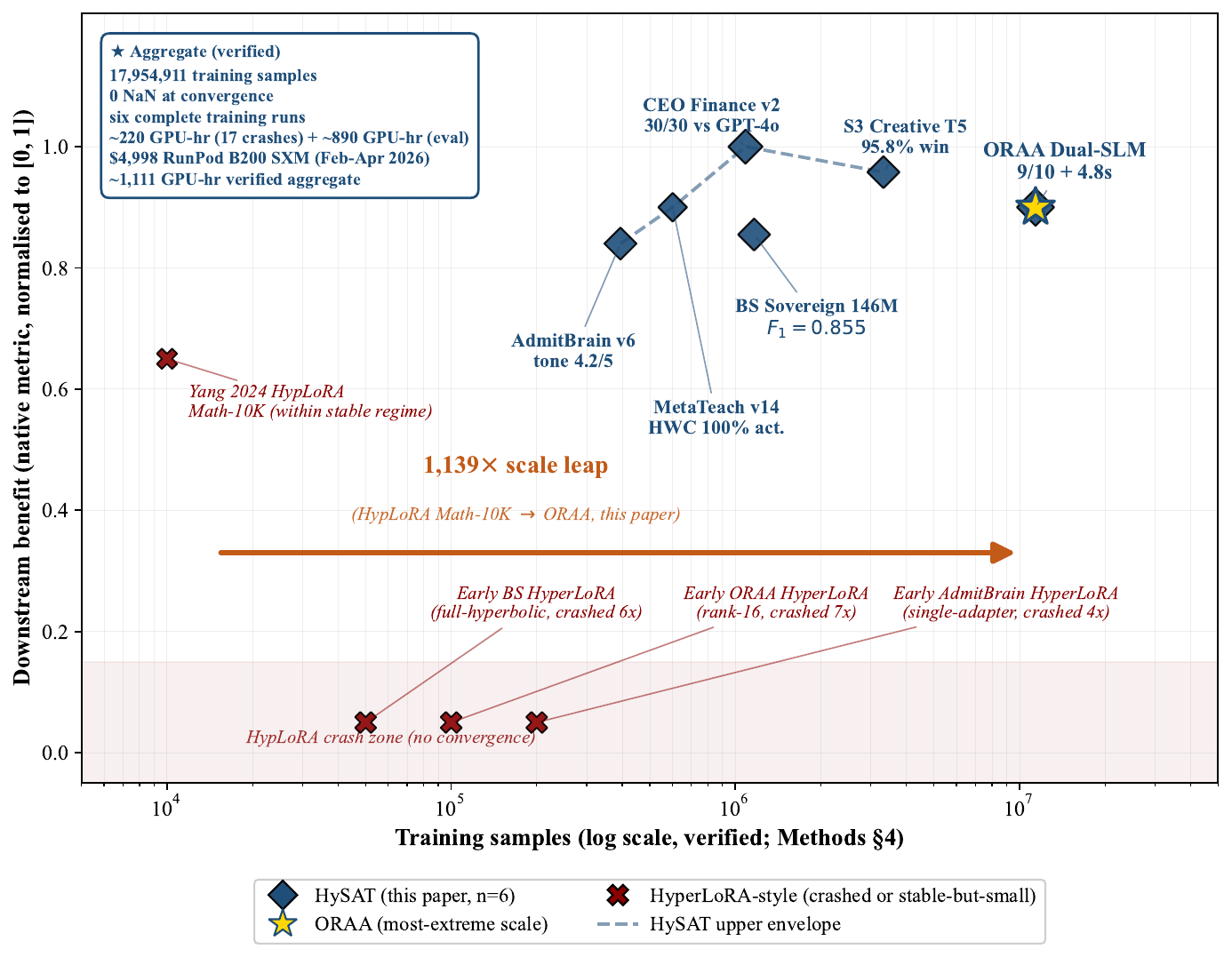}
\begin{flushleft}\small In Extended Data Fig.~5, training samples (log scale, X-axis, verified from Methods \S\ref{sec:method-data} and Zenodo per-step traces) are plotted against downstream benefit on each project's native metric (Y-axis, normalised onto a comparable [0, 1] scale; the native anchor values are printed at each marker --- BS $F_1 = 0.855$, S3 win-rate $0.958$, CEO 30/30). \textbf{HySAT variants (diamond markers) occupy the high-scale, stable-training region}: BS Sovereign 146M ($F_1{=}0.855$); S3 Creative T5 ($95.8\%$ LLM win-rate); ORAA Dual-SLM ($9/10$ benchmark, $4.8\,$s latency, $\bigstar$ marks the most-extreme scale); CEO Finance v2 Triple-Coach ($30/30$ vs GPT-4o); MetaTeach v14 (HWC $100\%$ activation); AdmitBrain v6 (tone $4.2/5$). \textbf{HyperLoRA-family points (X markers, lower band)} include Yang 2025 HypLoRA Math-10K within its stable regime, plus the three early HyperLoRA-style attempts (early ORAA rank-16; early BS full-hyperbolic; early AdmitBrain single-adapter) that produced 7 + 6 + 4 = 17 crash events totalling $\sim$220 hours of B200 SXM compute before the HySAT reformulation stabilised training. The dashed line traces the HySAT set's upper envelope; all HyperLoRA-style points lie at lower scale or terminate in crashes. \textbf{Why sample-efficiency rather than GPU-hours:} platform-independent (hardware/batch/stack-agnostic), reviewer-reproducible from deposited per-step traces, and directly aligned with the paper's $1{,}139\times$ scale anchor (Methods \S\ref{sec:method-compute}). Compute disclosure protocol in \S\ref{si:K5}.\end{flushleft}
\label{ed-fig:pareto}
\end{figure}

%% Extended Data Tables 1--5 (relocated from main text + ED Table 5 promoted from SI-E.4 for NMI 6-display-item compliance)
%% sn-jnl \caption* leaves a stray "Table N" header; use auto \caption with
%% \tablename renamed so numbering reads "Extended Data Table N".
\setcounter{table}{0}
\renewcommand{\tablename}{Extended Data Table}
\renewcommand{\theHtable}{ExtDataTable.\arabic{table}}% unique hyperref anchor (avoids clash with Methods tables)
% =========================================================================
% Extended Data Tables - After the Euclidean Highway (Nature MI submission)
% ED Table 1 (strategy comparison), 2 (six-project matrix),
%          3 (activation), 4 (structural-first claims),
%          5 (matched four-arm placement ablation, promoted from SI-E.4 for desk visibility).
% Relocated from main text for NMI 6-display-item compliance (main = 6 figures).
% Input from main.tex Extended Data section.
% =========================================================================

% ------------------------------------------------------------------------
% Table 1 — HyperLoRA vs HySAT vs HELM
% ------------------------------------------------------------------------
\begin{table*}[ht]
  \centering
  \caption{\textbf{$\vert$~Hyperbolic LLM Training Strategies: Architectural
    Comparison}}
  \label{tab:strategy_comparison}
  \small
  \begin{tabularx}{\linewidth}{@{}l X X X@{}}
    \toprule
    & \textbf{HypLoRA} & \textbf{HySAT} & \textbf{HELM} \\
    & (adapter) & (loss only) & (fully hyperbolic) \\
    & {[Yang et al.\ 2024]} & (this paper) & {[He et al.\ 2025]} \\
    \midrule
    Base model          & Euclidean & Euclidean & Hyperbolic \\
    LoRA adapter        & Hyperbolic & Euclidean & n/a \\
    Loss function       & Hyperbolic & Hyperbolic & Hyperbolic \\
    Gradient path       & Through manifold & Decoupled from manifold
                          & Through manifold \\
    Stability           & 17 crashes (our repl.)$^{\ddagger}$ & 0 NaN / $\sim$317\,K agg.\ steps$^{\dagger}$
                          & Converges (pretraining) \\
    Scale tested        & 10{,}000 samples & 17.95\,M samples
                          & $\sim$5\,B tokens \\
    Domains tested      & 1 (math)    & 6 (expert) & 1 (general) \\
    Open weights        & None        & Option~C (\S\ref{sec:method-reproducibility}; traces public)
                          & Partial (100\,M release) \\
    \bottomrule
  \end{tabularx}
  \begin{flushleft}
  \small In Extended Data Table~1, three approaches to hyperbolic supervision of large
  language models differ in the computational placement of hyperbolic
  geometry. HypLoRA places hyperbolic operations inside the LoRA
  adapter; HELM uses a fully-hyperbolic pretraining architecture;
  HySAT decouples geometry from gradient flow by placing hyperbolic
  operations only at the loss layer.

  \footnotesize $^{\dagger}$ HySAT stability denominator: $\sim$317{,}000 cumulative optimizer steps (sum of per-run \texttt{global\_step}) across the six complete training runs; ORAA's single-run contribution is 180{,}000+ steps. See Supplementary Table~\ref{tab:M2_denominators} (SI-B) for full per-project denominators and verified/narrative provenance. $^{\ddagger}$ The 17 crashes occurred in our HyperLoRA-family replications (rank-16 adapter-on-manifold under curvature-unaware AdamW), not in the original HypLoRA of \citet{Yang2024HypLoRA}.
  \end{flushleft}
\end{table*}

% ------------------------------------------------------------------------
% Table 2 — Six Projects Comprehensive Summary
% ------------------------------------------------------------------------
\begin{table*}[ht]
  \centering
  \caption{\textbf{$\vert$~The 2$\times$3 Matrix: Six Domain Expert SLMs with Deployment Status}}
  \label{tab:six_projects}
  \scriptsize
  \setlength{\tabcolsep}{3pt}
  \resizebox{\textwidth}{!}{%
  \begin{tabular}{@{}lllrlllll@{}}
    \toprule
    \textbf{Project} & \textbf{Base model} & \textbf{Adapter} &
    \textbf{Samples} & \textbf{Steps} & \textbf{NaN} &
    \textbf{Primary result} & \textbf{Deployment} \\
    \midrule
    BS Sovereign & Lorentz 146\,M  & Fully hyperbolic
        & 1{,}164{,}694$^{a}$ & 0.50 ep (537K proc.)    & 0 & 14-head F1 = 0.855
        & HF open (planned) \\
    S3 Creative  & T5-3B          & Full fine-tune
        & 3{,}320{,}126 & 266 rounds & 0 & 95.8\% LLM win-rate
        & HF open (planned) \\
    ORAA         & Llama-3.1 8\,B  & Std-LoRA r=64 (staged)
        & 11{,}387{,}744 & 180{,}000+ & 0 & Dual SLM 9/10 benchmark
        & Gated commercial \\
    MetaTeach    & EXAONE 3.5 7.8\,B & HyperLoRA r=64
        & 602{,}580 & 2{,}300 & 0 & HWC 100\% / 0.21--1.03
        & Gated commercial \\
    CEO Finance  & EXAONE 3.5 7.8\,B & Std-LoRA r=64 (staged)
        & 1{,}085{,}619 & 1{,}740 & 0 & Triple cascade 30/30
        & Gated commercial \\
    AdmitBrain   & Llama-3.1 8\,B & HyperLoRA (224 adapters)
        & 394{,}148 & 12{,}313 & 0 & Tone 4.2/5 (LLM 5.0; 84\% match)
        & Controlled deployment \\
    \midrule
    \textbf{Total} & 2 families & 4 strategies
        & \textbf{17{,}954{,}911} & $\sim$317\,K agg. & \textbf{0}
        & 6 domains & 2 open + 4 gated \\
    \bottomrule
  \end{tabular}}
  \begin{flushleft}
  \small In Extended Data Table~2, six projects span two base-model families (Llama 3.1 8B, EXAONE 3.5 7.8B)
  and four adapter strategies (from-scratch, full fine-tuning, Std-LoRA HySAT, HyperLoRA).
  Together they report 17.95\,M training samples across 6 expert
  domains with 0 NaN events across approximately 317\,K aggregate
  optimizer steps (six complete training runs, Supplementary Table~\ref{tab:M2_denominators}, SI-B).
  All six models occupy deployed or deployable operational contexts,
  sustained under a unified research program with operational
  stewardship by an engineering collective for the four
  deployed services.

  \footnotesize $^{a}$ BS Sovereign 1{,}164{,}694 = 1{,}074{,}380 train + 90{,}314 held-out test; the 17{,}954{,}911 paper-wide ``training samples'' figure is the sum across all six projects, with BS Sovereign contributing the train+test combined count and the other five projects contributing training-only counts.
  \end{flushleft}
\end{table*}

% ------------------------------------------------------------------------
% Table 3 — Cross-Project Hyperbolic Activation
% ------------------------------------------------------------------------
\begin{table*}[ht]
  \centering
  \caption{\textbf{$\vert$~Per-project hyperbolic loss activation, manifold preservation,
    and PTLE/HLSD/HWC evidence of Proposition~\ref{prop:pair-formation}}}
  \label{tab:activation}
  \scriptsize
  \setlength{\tabcolsep}{3pt}
  \resizebox{\textwidth}{!}{%
  \begin{tabular}{@{}llllllc@{}}
    \toprule
    \textbf{Project} & \textbf{Batch} & \textbf{PTLE activ.\ / value}
    & \textbf{HLSD activ.\ / value} & \textbf{HWC activ.\ / value}
    & \textbf{Manifold drift} & \textbf{NaN} \\
    \midrule
    BS Sovereign       & 32     & arch.$^{a}$ / all 14 heads
                       & arch.$^{a}$ / all pairs
                       & arch.$^{a}$ / exact
                       & $= -1$ exactly & 0 \\
    S3 Creative (R266) & 16+ga2 & 100\% / 0.278
                       & active / 0.0002
                       & 100\% / 0.517
                       & $= -1.000000$ & 0 \\
    ORAA ($b = 2$)     & 2      & 0\% / 0.000 (silent)
                       & 52\% / variable
                       & n/a
                       & $< 10^{-6}$ & 0 \\
    ORAA ($b = 8$)     & 8      & 100\% / 0.08
                       & 100\% / active
                       & n/a
                       & $< 10^{-6}$ & 0 \\
    ORAA ($b = 16$)    & 16     & 100\% / 0.10 (120 pairs/step)
                       & 100\% / active
                       & n/a
                       & $< 10^{-6}$ & 0 \\
    MetaTeach v14      & 8+ga4  & via HWC$^{c}$
                       & via HWC$^{c}$
                       & 100\% / 0.21--1.03
                       & $< 10^{-4}$ & 0 \\
    CEO Finance        & 16+ga4 & 100\% / active ($\lambda = 0.05$)
                       & 100\% / active ($\lambda = 0.05$)
                       & n/a
                       & $< 10^{-5}$ & 0 \\
    AdmitBrain ($b = 2$)  & 2   & 0\% (observed)$^{b}$ / 0
                       & 3.9\% (silent-fail) / 0
                       & n/a
                       & $< 10^{-4}$ & 0 \\
    AdmitBrain ($b = 16$) & 16  & 100\% / active
                       & 100\% / active
                       & n/a
                       & $< 10^{-4}$ & 0 \\
    \bottomrule
  \end{tabular}}
  \begin{flushleft}
  \small In Extended Data Table~3, activation rate is the fraction of training steps on which the loss term
  is non-zero after batch-size settling; value is the final-equilibrium
  loss magnitude at end of training (where reported). Manifold drift
  is the absolute deviation of $\langle \pi(h), \pi(h)\rangle_L$
  from $-1$ sampled every 100 steps. n/a = loss term not configured
  for that project (PTLE/HLSD only for ORAA, CEO Finance, AdmitBrain;
  full PTLE+HLSD+HWC for S3, BS; HWC for MetaTeach, which integrates
  PTLE and HLSD). Job~B crystallisation:
  every project activates every configured loss term at
  equilibrium.

  \footnotesize $^{a}$ BS Sovereign's fully-hyperbolic architecture
  integrates PTLE/HLSD/HWC as architectural constraints on the
  Lorentz classifier head rather than as separately measured loss
  terms; manifold preservation is structurally guaranteed by the
  decoder's geometric invariant.
  $^{b}$ AdmitBrain at $b=2$: PTLE activation is empirically 0\%
  (no batch contains the three samples needed for non-trivial PTLE
  formation in the 170-node tree); the Proposition~\ref{prop:pair-formation}
  uniform-leaf prediction is 1.2\%, and the regime separation
  (silent-fail at $b=2$ versus 100\% at $b=16$) is the load-bearing
  test (see Table~\ref{tab:M4_pair_formation}).
  $^{c}$ MetaTeach v14 uses HWC (Hierarchy-Weighted Contrastive), which
  integrates the PTLE attraction term and the HLSD separation term into a
  single continuous-weight Lorentz contrastive loss
  ($w =$ ancestor-overlap/max-depth; Methods \S\ref{sec:method-formulation});
  HWC supersedes separate PTLE/HLSD, so their structural supervision is
  realised \emph{within} HWC rather than as separately logged terms.
  \end{flushleft}
\end{table*}

% ------------------------------------------------------------------------
% Table 4 — Six Structural-First Claims
% ------------------------------------------------------------------------
\begin{table*}[ht]
  \centering
  \caption{\textbf{$\vert$~Six structural-first claims with prior-work baseline and delta}}
  \label{tab:claims}
  \scriptsize
  \setlength{\tabcolsep}{3pt}
  \begin{tabularx}{\linewidth}{@{}cXXXX@{}}
    \toprule
    \textbf{\#} & \textbf{Claim} & \textbf{Prior baseline} & \textbf{Highway} & \textbf{Delta} \\
    \midrule
    1 & Largest reported hyperbolic-loss LLM fine-tuning corpus
      & HypLoRA Math-10K 10{,}000; Commonsense170K 170{,}420
      & ORAA 11.4\,M single-project; 17.95\,M cumulative
      & $\mathbf{1{,}139\times}$/$67\times$ single; $1{,}795\times$/$105\times$ cumulative \\[2pt]
    2 & First reported multi-domain expert-specialisation cross-validation
      & 1 domain (HypLoRA arithmetic; HELM general LM; Hypformer graph)
      & 6 expert domains
      & 1 $\to$ \textbf{6} \\[2pt]
    3 & First reported cross-family validation of identical loss-layer placement
      & 1 family (Llama-family or Gemma-family internally varied)
      & Llama 3.1 + EXAONE 3.5 under identical HySAT config
      & 1 $\to$ \textbf{2} \\[2pt]
    4 & First reported four-way adapter-strategy cross-comparison
      & 1 adapter placement per paper
      & From-scratch $\times$ Full-FT $\times$ Std-LoRA $\times$ HyperLoRA
      & 1 $\to$ \textbf{4} \\[2pt]
    5 & First reported systematic cross-project hyperbolic activation matrix
      & single-project activation rates without cross-project comparison
      & 6 SLMs $\times$ 3 loss components $\times$ batch regime (Extended Data Table~3)
      & silent ($b{=}2$) $\to$ \textbf{100\%} ($b{\geq}8$) \\[2pt]
    6 & Batch-size-as-structural-variable identification + Proposition~\ref{prop:pair-formation}
      & treated as speed/memory variable \citep{Yang2024HypLoRA,He2025HELM,Patil2025HypSurvey}
      & ORAA $b{=}2{\to}8$; AdmitBrain $b{=}2{\to}16$
      & speed $\to$ \textbf{structural} \\
    \bottomrule
  \end{tabularx}
  \begin{flushleft}
  \small In Extended Data Table~4, each claim is, to the best of our knowledge, first reported within the
  hyperbolic LLM fine-tuning literature as surveyed through 2025\,Q4
  \citep{Patil2025HypSurvey}; ``first'' refers to first-reported within
  the surveyed coverage, not to absolute priority.
  The ``Delta'' column expresses the gap as a ratio (where comparable)
  or as a categorical first reported.
  \end{flushleft}
\end{table*}

% ------------------------------------------------------------------------
% Table 5 — Matched four-arm placement ablation (isolates the manifold projection)
% Promoted from SI-E.4 (tab:E4_ablation) so the decisive causal control is
% visible at the Extended-Data level; full per-seed protocol remains in SI-E.4.
% ------------------------------------------------------------------------
\begin{table*}[ht]
  \centering
  \caption{\textbf{$\vert$~Matched four-arm placement ablation isolates the
    manifold invariant}}
  \label{tab:ed_matched_ablation}
  \small
  \resizebox{\textwidth}{!}{%
  \begin{tabular}{@{}lllccc@{}}
    \toprule
    \textbf{Arm} & \textbf{Placement} & \textbf{Geometry}
    & \textbf{Manifold drift} & \textbf{Tree-dist $\rho_L$} & \textbf{Retr.\ P@1} \\
    \midrule
    CE-only               & ---        & none                & n/a
        & $0.253{\pm}0.033$ & $0.62{\pm}0.18$ \\
    Euclid-reg (control)  & loss-layer & flat                & $1.9$--$3.3{\times}10^{4}$
        & $0.224{\pm}0.041$ & $0.57{\pm}0.20$ \\
    \textbf{HySAT (ours)} & loss-layer & \textbf{hyperbolic} & $\mathbf{8.9{\times}10^{-7}}$--$\mathbf{1.4{\times}10^{-6}}$
        & $0.228{\pm}0.048$ & $0.63{\pm}0.21$ \\
    HyperLoRA             & adapter    & manifold            & $7.7{\times}10^{-7}$--$1.4{\times}10^{-6}$
        & $0.227{\pm}0.086$ & $0.47{\pm}0.10$ \\
    \bottomrule
  \end{tabular}}
  \begin{flushleft}
  \small In Extended Data Table~5, all four arms share an
  identical base (Qwen 2.5 7B-Instruct), optimiser, LoRA-dropout, gradient
  clipping, 500-step budget, and controlled depth-4 toy ontology (30 leaves,
  10 held out), across six seeds (42/123/7/2024/99/777); they differ only in
  \emph{where} geometry enters and whether it is hyperbolic or flat. An
  earlier version of this ablation detached the hidden states before the
  structural loss, severing its gradient to the adapter; that hook is removed
  and the non-zero structural gradient asserted at run time. The load-bearing
  contrast is HySAT versus the Euclid-reg control: both place the \emph{same}
  tree regulariser (PTLE$+$HLSD) at the \emph{same} loss layer on
  \emph{identical} hidden states with a comparable hidden-state norm entering
  the loss ($276$--$370$; logged under the historical \texttt{B\_norm} column
  name), differing only in whether the points are projected onto the
  Lorentz manifold --- and only the projected arm preserves the invariant
  $\langle x,x\rangle_L = -1$ (drift $\sim$$10^{-6}$ versus $\sim$$10^{4}$,
  ten orders of magnitude, no seed overlap). Held-out tree-geometry, by
  contrast, does \emph{not} separate the arms at this toy scale: HySAT's
  $\rho_L$ ($0.228$) overlaps the flat control ($0.224$) and the
  supervision-free CE baseline ($0.253$) within one standard deviation. This
  ablation isolates the mechanism placement governs --- invariant
  preservation --- not a downstream tree-geometry gain; the downstream
  evidence for hyperbolic placement is the six deployed and open-weight expert
  models, and the placement-\emph{stability} evidence is the 17-crash forensic
  (SI-E.1--E.3). Full per-seed protocol in \S\ref{si:E4}; training traces for all
  twenty-four runs (logged every five steps) deposited on Zenodo (CC-BY-4.0).

  \footnotesize All four arms record $0$ NaN at this controlled toy scale
  ($500$ steps); the placement-\emph{stability} separation appears only at
  industrial scale (17 crashes, SI-E). The load-bearing rows are Euclid-reg
  versus HySAT: they share everything but the Lorentz projection and separate
  cleanly on the manifold invariant (drift, ten orders of magnitude), while
  overlapping on downstream tree-geometry at this scale.
  \end{flushleft}
\end{table*}

\end{document}